\documentclass[letterpaper, 10 pt, conference]{ieeeconf}  
\IEEEoverridecommandlockouts                              
\usepackage{amsmath,amsfonts}
\usepackage{algorithm}
\usepackage{array}
\usepackage{textcomp}
\usepackage{stfloats}
\usepackage{url}
\usepackage{verbatim}
\usepackage{graphicx}
\usepackage{cite}
\usepackage{booktabs}
\usepackage{multirow}
\usepackage{tabularx}
\usepackage{subcaption}
\usepackage{algpseudocode}
\usepackage[
    letterpaper,
    top=60pt,
    bottom=43pt,
    left=48pt,
    right=48pt
]{geometry}

\newtheorem{proposition}{Proposition}

\begin{document}
\title{\LARGE \bf Monte Carlo-Based Indoor UAV Localization in Dynamic Environments Using Only Four Onboard Range Sensors}


\author{Mohammad Reza Ranjbar Divkoti and A. Pedro Aguiar 
\thanks{Authors are with SYSTEC-ARISE Research Center for Systems and Technologies, Faculty of Engineering, University of Porto 4200-465 Porto, Portugal.}%
\thanks{This work, conducted under the FCT doctoral scholarship with reference number 2023.01812.BD, was partially supported by R\&D Unit SYSTEC--Systems and Technologies Center (UID/00147, DOI 10.54499/UID/00147/2025) and Associate Laboratory ARISE--Advanced Production and Intelligent Systems (DOI 10.54499/LA/P/0112/2020), both funded by Fundação para a Ciência e a Tecnologia (FCT), through the national funds. This work has been submitted to the IEEE for possible publication. Copyright may be transferred without notice, after which this version may no longer be accessible.}
}

\maketitle
\thispagestyle{empty}
\pagestyle{empty}

\begin{abstract}
Indoor UAV localization with minimal onboard sensing remains challenging, particularly in dynamic environments. We propose augmented filtered Monte Carlo localization (AF-MCL) for 2D localization using only four single-point range sensors and measured heading. AF-MCL rejects range measurements inconsistent with the known static map and adaptively injects particles to recover from localization divergence and kidnapping. Beam classification is performed only at the previous position estimate, making its ray-casting cost independent of the particle count. The particle distribution additionally provides an online measure of localization uncertainty. Real-world experiments in a 2.10~$\times~$1.63~m environment yield mean position errors of 6.4--7.7~cm under static and dynamic conditions. Simulations in 6~$\times$~6~m and 10~$\times$~10~m environments yield 4.5--5.2~cm mean errors; under dynamic obstacles, standard MCL reaches 12.3~cm and reliable MCL 7.7~cm. The results show that robust indoor localization under dynamic obstacles is achievable with only four single-point range measurements and heading.
\end{abstract}

\section{Introduction}
Unmanned Aerial Vehicles (UAVs) are increasingly
deployed for inspection, inventory monitoring, environmental sensing,
and search-and-rescue missions. Their operation indoors requires
accurate and low-latency localization despite confined geometry,
clutter, and the absence of a reliable global positioning
signal~\cite{heidari2023machine,boursianis2022internet,
ferreira2017localization,wu2023flytracker}.

Existing indoor localization solutions typically trade sensing and
computational complexity for accuracy. UWB-based methods require
external infrastructure and remain sensitive to multipath and
non-line-of-sight conditions~\cite{zhao2022finding,hashim2023uwb,
nguyen2021range}, while vision- and LiDAR-based approaches provide rich
environmental information at higher sensing and computational
cost~\cite{akai2023reliable,he2022vision,lu2019l3,
zimmerman2022robust,nguyen2025uloc}. Visual-inertial methods reduce
dependence on external infrastructure but accumulate drift, motivating
fusion with additional sensing modalities~\cite{wu2023flytracker,
chen2024refloc,jiang2024robust}. These limitations motivate
localization using the sparse, lightweight range sensing available on
small UAV platforms.

A closely related approach is~\cite{muller2023fully}, which performs
onboard Monte Carlo localization on a nano-UAV using a pair of 64-zone
time-of-flight (ToF) sensors and a parallel ultra-low-power processor.
Although computationally efficient, each multizone sensor provides
substantially more spatial information than a single-point range
measurement, and the method is evaluated in static environments. The
problem considered here is therefore more sensing-constrained: can
reliable indoor localization be achieved from only four single-point
range measurements and heading when individual measurements may also
be corrupted by unknown moving obstacles?

We address this problem using four single-point range sensors oriented
front, back, left, and right. The UAV position is estimated relative
to a known initial position in a known static map, using the four range
measurements and measured heading $\theta$. Unknown dynamic obstacles
may enter the sensor beams and generate ranges inconsistent with the
static map, as illustrated in Fig.~\ref{fig:problem_def}. No external
localization infrastructure is used.

\begin{figure}[t]
    \centering
    \includegraphics[width=0.4\linewidth]{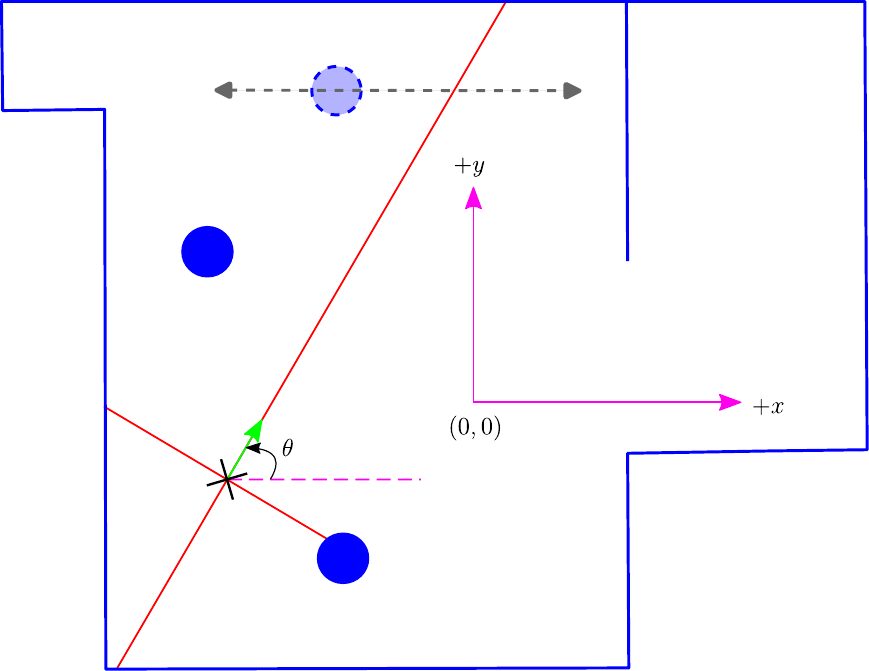}
    \caption{UAV localization with four directional range sensors in a
    known static map and unknown dynamic obstacles.}
    \label{fig:problem_def}
\end{figure}

To address this setting, we propose AF-MCL. AF-MCL identifies and rejects range measurements
inconsistent with the static map before particle weighting, and
adaptively injects particles when filter divergence is detected.
Crucially, beam classification is evaluated only at the previous
position estimate rather than independently at every particle,
eliminating particle-count dependence from the classification
ray-casting stage. The particle distribution additionally provides an
online measure of localization uncertainty.

The contribution is threefold. First, we demonstrate 2D indoor UAV
localization using only four single-point ranges and measured heading
under both static and dynamic obstacles. Second, AF-MCL combines
dynamic-beam filtering with adaptive particle injection for robustness
to corrupted measurements and localization failures, while requiring
only one classification ray cast per sensor, independent of the
particle count. Third, simulation and real-world experiments evaluate
the contributions of filtering and augmentation, recovery from
kidnapping, computational scaling, retained-beam geometry, and
localization performance. Mean errors of 4.5--5.2~cm are obtained in
the simulated $6\times6$~m and $10\times10$~m environments, while
real-world flights in a $2.10\times1.63$~m environment yield
6.4--7.7~cm mean error.

\section{Monte Carlo-Based Localization Algorithm}
\label{localization_algorithm}

This section presents the proposed Monte Carlo-based localization framework. It first introduces the standard Monte Carlo Localization (MCL) algorithm for static indoor environments, then presents an AF-MCL extension that improves robustness to unknown dynamic obstacles and localization failures. 

\subsection{Monte Carlo Localization for Static Indoor Environments}

State representation: Let the UAV's 2D pose at time step $t$ be defined as $\mathbf{x}_t = [x_t, y_t, \theta_t]^\top$, where $(x_t, y_t) \in \mathbb{R}^2$ is its global position. The yaw orientation $\theta_t \in [-\pi, \pi)$ is measured directly by an onboard sensor (e.g., an IMU) and treated as a known measurement rather than an estimated state variable.

The belief over the 2D position at time step $t$ is represented by a set of $N$ weighted particles $\mathcal{X}_t = \left\{ \mathbf{x}_t^{[i]}, w_t^{[i]} \right\}_{i=1}^{N}$. The state hypothesis of the $i$-th particle is given by $\mathbf{x}_t^{[i]} = [x_t^{[i]}, y_t^{[i]}, \theta_t]^\top$, where the position components $(x_t^{[i]}, y_t^{[i]})$ are estimated, while the orientation $\theta_t$ is not estimated and is set directly to the UAV's current measured yaw. Each particle has a normalized importance weight $w_t^{[i]} \ge 0$ such that $\sum_{i=1}^{N} w_t^{[i]} = 1$.

Particle Set initialization: Particles are sampled around the known initial position $\mathbf{\bar{x}}_0$ using a 2D Gaussian distribution $\mathcal{N}(\mathbf{\bar{x}}_0, \Sigma_0)$, where $\Sigma_0$ is the initial position covariance. All weights are set uniformly to $\frac{1}{N}$.

Bayesian formulation: Following the recursive Bayes filter formulation, at time step $t$, the belief distribution $\mathrm{bel}(\mathbf{x}_t)$ is updated as $\mathrm{bel}(\mathbf{x}_t) = \eta \, p(z_t \mid \mathbf{x}_t, m) \int p(\mathbf{x}_t \mid \mathbf{u}_t, \mathbf{x}_{t-1}) \, \mathrm{bel}(\mathbf{x}_{t-1}) \, d\mathbf{x}_{t-1}$, where $\eta$ is a normalization constant, $z_t$ is the sensor measurement vector at time step $t$, and $m$ represents the global occupancy grid map. The control input $\mathbf{u}_t = [v_x, v_y]_t^\top$ consists of the UAV's body-frame linear velocities at time step $t$.

Motion update: Each particle's position is updated by projecting body-frame velocities $\mathbf{u}_t$ into the global frame using measured yaw $\theta_t$, $\mathbf{x}_t^{[i]} = \mathbf{x}_{t-1}^{[i]} + R(\theta_t)\mathbf{u}_t \Delta t + \epsilon_t, \quad \epsilon_t \sim \mathcal{N}(\mathbf{0}, \Sigma_u \Delta t)$, where $R(\theta_t)$ is the 2D rotation matrix and $\epsilon_t$ is Gaussian noise with motion covariance $\Sigma_u$.

Measurement model: Let $q \in \{f, l, b, r\}$ index the four distance sensors (front, left, back, right) mounted on the UAV, where each sensor has a fixed mounting offset $\phi_q$ and yields a measured distance $z_t^q$ at time step $t$. For each sensor $q$, the beam's global endpoint $\hat{p}_q^{[i]} = \begin{bmatrix} x_t^{[i]} \\ y_t^{[i]} \end{bmatrix} + z_t^q \begin{bmatrix} \cos(\theta_t + \phi_q) \\ \sin(\theta_t + \phi_q) \end{bmatrix}$. The measurement likelihood is then evaluated via the map's Euclidean Distance Transform (EDT) $$p(z_t^q \mid \mathbf{x}_t^{[i]}, m) = \frac{1}{\sqrt{2\pi}\sigma_{\mathrm{obs}}} \exp\left( -\frac{d_{\mathrm{EDT}}(\hat{p}_q^{[i]})^2}{2\sigma_{\mathrm{obs}}^2} \right),$$ where $d_{\mathrm{EDT}}(\hat{p}_q^{[i]})$ returns the shortest distance from the endpoint to the nearest obstacle, and $\sigma_{\mathrm{obs}}$ is the sensor noise standard deviation.

Weight update: Assuming independent sensor beams, the joint likelihood for particle $i$ is the product of individual beam likelihoods $w_t^{[i]} \propto \prod_{q} p(z_t^q \mid \mathbf{x}_t^{[i]}, m)$. In practice, this is evaluated in log-space as $\sum_{q} \log p(z_t^q \mid \mathbf{x}_t^{[i]}, m)$ for numerical stability. The weights are subsequently normalized such that $\sum_{i=1}^{N} w_t^{[i]} = 1$.

Resampling and estimation: Resampling is triggered whenever the effective sample size $\zeta = \frac{1}{\sum_{i=1}^{N} \left(w_t^{[i]}\right)^2}$ falls below the threshold $\gamma N$, where $\gamma \in (0, 1)$ is a predefined hyperparameter. Systematic low-variance sampling is used to draw the new particle set. The estimated 2D position $\mathbf{\hat{x}}_t$ is calculated as the weighted particle mean $\mathbf{\hat{x}}_t = \sum_{i=1}^{N} w_t^{[i]} \begin{bmatrix} x_t^{[i]} \\ y_t^{[i]} \end{bmatrix}$.

\subsection{Augmented Filtered Monte Carlo Localization (AF-MCL) for Dynamic Indoor Environments}
\begin{algorithm}[!t]
\small
\caption{AF-MCL Update at Time Step $t$ (Dynamic Indoor Environments)}
\label{alg:amcl}
\begin{algorithmic}
\State \textbf{Input:} Map $m$, previous particle set $\{\mathbf{x}_{t-1}^{[i]}\}_{i=1}^N$, previous estimate $\hat{\mathbf{x}}_{t-1}$, motion input $\mathbf{u}_t$, measured yaw $\theta_t$, range measurements $\{z_t^q\}$, running likelihood averages $\bar{\ell}_{\mathrm{slow}}, \bar{\ell}_{\mathrm{fast}}$, decay rates $\alpha_{\mathrm{slow}} \ll \alpha_{\mathrm{fast}}$, injection cap $\nu$, beam threshold $\chi$
\State \textbf{Output:} Updated particle set $\{\mathbf{x}_t^{[i]}, w_t^{[i]}\}_{i=1}^N$, estimated position $\hat{\mathbf{x}}_t$, covariance $\hat{\Sigma}_t$, uncertainty $\sigma_t$, updated $\bar{\ell}_{\mathrm{slow}}, \bar{\ell}_{\mathrm{fast}}$

\State \textbf{Static-beam classification (at previous estimate):}
\For{each sensor $q$}
    \State $\hat{z}_t^q \leftarrow \mathrm{RayCast}\big(m,\hat{\mathbf{x}}_{t-1},\theta_t+\phi_q\big)$
\EndFor
\State $Q_t \leftarrow \left\{ q : \dfrac{z_{\mathrm{hit}}\,p_{\mathrm{hit}}(z_t^q\mid\hat{z}_t^q)}{p(z_t^q\mid\hat{z}_t^q)} > \chi \right\}$

\State \textbf{Motion update:}
\For{each particle $i = 1,\dots,N$}
    \State $\mathbf{x}_t^{[i]} \leftarrow \mathbf{x}_{t-1}^{[i]} + R(\theta_t)\mathbf{u}_t\Delta t + \epsilon_t^{[i]}, \quad \epsilon_t^{[i]} \sim \mathcal{N}(\mathbf{0},\Sigma_u\Delta t)$
\EndFor

\If{$Q_t = \emptyset$} \Comment{motion-only update}
    \State $w_t^{[i]} \leftarrow 1/N$; \ $\bar{\ell}_{\mathrm{slow}}, \bar{\ell}_{\mathrm{fast}}$ unchanged
\Else
    \State $\ell_t^{[i]} \leftarrow \exp\!\left(\dfrac{1}{|Q_t|}\sum_{q\in Q_t} \log p\big(z_t^q \mid \mathbf{x}_t^{[i]}, m\big)\right), \quad i=1,\dots,N$ \Comment{unnormalized}
    \State $w_t^{[i]} \leftarrow \ell_t^{[i]} \big/ \sum_{j=1}^N \ell_t^{[j]}$ \Comment{normalize}
    \State \textbf{Divergence detection (unnormalized likelihoods):}
    \State $\bar{\ell}_t \leftarrow \frac{1}{N}\sum_{i=1}^N \ell_t^{[i]}$
    \State $\bar{\ell}_{\mathrm{slow}} \leftarrow (1-\alpha_{\mathrm{slow}})\,\bar{\ell}_{\mathrm{slow}} + \alpha_{\mathrm{slow}}\,\bar{\ell}_t$
    \State $\bar{\ell}_{\mathrm{fast}} \leftarrow (1-\alpha_{\mathrm{fast}})\,\bar{\ell}_{\mathrm{fast}} + \alpha_{\mathrm{fast}}\,\bar{\ell}_t$
    \State $p_{\mathrm{random}} \leftarrow \min\!\left(\nu,\ \max\!\left(0,\ 1-\dfrac{\bar{\ell}_{\mathrm{fast}}}{\bar{\ell}_{\mathrm{slow}}}\right)\right)$
    \State \textbf{Augmented resampling:}
    \State $K_{\mathrm{rand}} \leftarrow \lfloor p_{\mathrm{random}} N \rfloor, \qquad K_{\mathrm{res}} \leftarrow N - K_{\mathrm{rand}}$
    \State $\mathcal{X}_{\mathrm{res}} \leftarrow \mathrm{SystematicResample}\big(\{\mathbf{x}_t^{[i]}\},\{w_t^{[i]}\},K_{\mathrm{res}}\big)$
    \State $\mathcal{X}_{\mathrm{rec}} \leftarrow \left\{\mathbf{x}_{\mathrm{rec}}^{[j]} \sim \mathcal{U}\left(m\right)  \ \middle|\ \mathbf{x}_{\mathrm{rec}}^{[j]} \text{ in free space of } m \right\}_{j=1}^{K_{\mathrm{rand}}}$
    \State $\{\mathbf{x}_t^{[i]}\}_{i=1}^N \leftarrow \mathcal{X}_{\mathrm{res}} \cup \mathcal{X}_{\mathrm{rec}}, \quad w_t^{[i]} \leftarrow 1/N$ \Comment{reset weights}
\EndIf

\State \textbf{Estimate and uncertainty (final particle set):}
\State $\hat{\mathbf{x}}_t \leftarrow \frac{1}{N}\sum_{i=1}^N \mathbf{p}_t^{[i]}$ \Comment{$\mathbf{p}_t^{[i]}=[x_t^{[i]},y_t^{[i]}]^\top$}
\State $\hat{\Sigma}_t \leftarrow \frac{1}{N}\sum_{i=1}^N (\mathbf{p}_t^{[i]}-\hat{\mathbf{x}}_t)(\mathbf{p}_t^{[i]}-\hat{\mathbf{x}}_t)^\top, \quad \sigma_t \leftarrow \sqrt{\mathrm{tr}(\hat{\Sigma}_t)}$
\State \Return $\{\mathbf{x}_t^{[i]}, w_t^{[i]}\}_{i=1}^N,\ \hat{\mathbf{x}}_t,\ \hat{\Sigma}_t,\ \sigma_t,\ \bar{\ell}_{\mathrm{slow}},\ \bar{\ell}_{\mathrm{fast}}$
\end{algorithmic}
\end{algorithm}

Algorithm~\ref{alg:amcl} extends the MCL algorithm by incorporating dynamic obstacle handling, a recovery mechanism, and an online estimate of localization uncertainty. The following subsections describe these additional stages.

Static-beam classification: To handle unknown dynamic obstacles, an inline beam-filtering mechanism is applied before weight updates. For each sensor $q$, a ray is cast through the map $m$ from the previous estimated pose $\mathbf{\hat{x}}_{t-1}$ along direction $(\theta_t + \phi_q)$ to determine the expected static range $\hat{z}_t^q$. Since the UAV's displacement between consecutive time steps is short (less than half of the UAV's diameter), using $\hat{\mathbf{x}}_{t-1}$ in place of per-particle ray casting, which is computationally infeasible under onboard hardware constraints, is an acceptable approximation. The received measurement $z_t^q$ is evaluated against a sensor mixture model $p(z_t^q \mid \hat{z}_t^q)= z_{\mathrm{hit}} p_{\mathrm{hit}}(z_t^q \mid \hat{z}_t^q) + z_{\mathrm{short}} p_{\mathrm{short}}(z_t^q \mid \hat{z}_t^q)$, where $z_{\mathrm{hit}}$ and $z_{\mathrm{short}}$ are mixing weights. The component densities are defined as
\begin{align*}
p_{\mathrm{hit}}(z_t^q \mid \hat{z}_t^q) &= \frac{1}{\sqrt{2\pi}\sigma_{\mathrm{obs}}} \exp\left(-\frac{(z_t^q - \hat{z}_t^q)^2}{2\sigma_{\mathrm{obs}}^2}\right),\\
p_{\mathrm{short}}(z_t^q \mid \hat{z}_t^q) &= 
\begin{cases} 
\kappa \, \lambda_{\mathrm{short}} e^{-\lambda_{\mathrm{short}} z_t^q} & \text{if } 0 \le z_t^q \le \hat{z}_t^q, \\
0 & \text{otherwise},
\end{cases}
\end{align*}
where $\lambda_{\mathrm{short}}$ is the exponential decay rate parameter modeling unexpected obstacle occurrences, and $\kappa = (1 - e^{-\lambda_{\mathrm{short}} \hat{z}_t^q})^{-1}$ is a normalization factor. Max-range ($z_{\max}$) and random noise ($z_{\mathrm{r}}$) components are retained with fixed weights so that all four mixing weights sum to 1; they contribute only to the denominator of the classification ratio and thus make the test slightly more conservative. Let $Q_t$ be the valid beam set. A sensor beam $q$ is valid and retained for particle weighting (added to the $Q_t$) if its posterior hit probability satisfies $$\frac{z_{\mathrm{hit}} p_{\mathrm{hit}}(z_t^q \mid \hat{z}_t^q)}{p(z_t^q \mid \hat{z}_t^q)} > \chi,$$ where $\chi \in (0, 1)$ is a decision threshold. Otherwise, the beam is classified as dynamic and excluded. Evaluating $\hat{z}_t^q$ solely at the previous pose estimate $\mathbf{\hat{x}}_{t-1}$ keeps the ray-casting overhead at $\mathcal{O}(1)$ per beam.

Divergence detection: To recover from global localization failures or kidnapping, an augmented recovery mechanism dynamically injects random particles when filter divergence is detected. For each particle, the unnormalized likelihood computed over the valid beams $Q_t$ is $\ell_t^{[i]} = \exp\left( \frac{1}{|Q_t|} \sum_{q \in Q_t} \log p(z_t^q \mid \mathbf{x}_t^{[i]}, m) \right),$ and the normalized importance weights are obtained as $w_t^{[i]} = \ell_t^{[i]} / \sum_{j=1}^{N}\ell_t^{[j]}$. Since normalized weights always satisfy $\frac{1}{N}\sum_i w_t^{[i]}=\frac{1}{N}$, the health of the filter is evaluated from the absolute average measurement likelihood across valid beams $Q_t$, computed before normalization as $\bar{\ell}_t = \frac{1}{N} \sum_{i=1}^{N} \ell_t^{[i]}.$
Two exponential moving averages of $\bar{\ell}_t$ are maintained
\begin{align*}
\bar{\ell}_{\mathrm{slow}} &\leftarrow (1 - \alpha_{\mathrm{slow}}) \bar{\ell}_{\mathrm{slow}} + \alpha_{\mathrm{slow}} \bar{\ell}_t, \\
\bar{\ell}_{\mathrm{fast}} &\leftarrow (1 - \alpha_{\mathrm{fast}}) \bar{\ell}_{\mathrm{fast}} + \alpha_{\mathrm{fast}} \bar{\ell}_t,
\end{align*}
where $\alpha_{\mathrm{slow}}$ and $\alpha_{\mathrm{fast}}$ are decay parameters satisfying $0 \le \alpha_{\mathrm{slow}} \ll \alpha_{\mathrm{fast}} \le 1$, and both averages are initialized with the first computed value of $\bar{\ell}_t$. When the filter performs poorly, $\bar{\ell}_{\mathrm{fast}}$ drops below $\bar{\ell}_{\mathrm{slow}}$, signaling divergence. The probability of injecting random particles, $p_{\mathrm{random}}$, is dynamically computed as $p_{\mathrm{random}} = \min \left( \nu, \, \max\left(0, \, 1 - \frac{\bar{\ell}_{\mathrm{fast}}}{\bar{\ell}_{\mathrm{slow}}}\right)\right)$, where $\nu \in (0, 1]$ caps the maximum proportion of random particles injected per iteration. The averages are updated only when $Q_t \neq \emptyset$.

Augmented resampling: When filter divergence is detected, recovery particles are sampled from a 2D uniform distribution covering the entire map $\mathbf{x}_{\mathrm{rec}}^{[i]} \sim \mathcal{U}\left(m\right)$, where 
$\mathcal{U}\left(m\right)$ denotes the uniform distribution over the map 
$m$, ensuring recovery particles are spread across the full extent of the environment, with drawn poses constrained to unoccupied space in the map $m$.

Given $K_{\mathrm{rand}} = \lfloor p_{\mathrm{random}} N \rfloor$ injected recovery particles and $K_{\mathrm{res}} = N - K_{\mathrm{rand}}$ systematic resampled particles, the updated particle set $\mathcal{X}_t$ is constructed as
\begin{align*}
\mathcal{X}_t = \underbrace{\left\{ \mathbf{x}_t^{[i]} \sim \mathrm{Resample}(w_t) \right\}_{i=1}^{K_{\mathrm{res}}}}_{\text{Importance Resampling}} \;\cup\; \underbrace{\left\{ \mathbf{x}_{\mathrm{rec}}^{[i]} \sim \mathcal{U}\left(m\right) \right\}_{i=1}^{K_{\mathrm{rand}}}}_{\text{Global Recovery}}.
\end{align*}

Online localization uncertainty: Besides the position estimate, AF-MCL outputs an online measure of how reliable that estimate is, computed at every time step from the particle set at negligible additional cost. The particle set approximates the posterior $\mathrm{bel}(\mathbf{x}_t)$, so its spread quantifies the remaining positional uncertainty. The estimate and its uncertainty are computed from the final particle set. The particles injected for recovery are therefore included, so the uncertainty increases when recovery particles are injected, which signals that the estimate is less trustworthy. The mean and covariance of particles $\mathbf{p}_t^{[i]} = \begin{bmatrix} x_t^{[i]} \\ y_t^{[i]} \end{bmatrix}, i=1,\dots,N$ are $\hat{\mathbf{x}}_t = \frac{1}{N}\sum_{i=1}^{N} \mathbf{p}_t^{[i]}$ and $\hat{\Sigma}_t = \frac{1}{N}\sum_{i=1}^{N}
\left(\mathbf{p}_t^{[i]} - \hat{\mathbf{x}}_t\right)\left(\mathbf{p}_t^{[i]} - \hat{\mathbf{x}}_t\right)^\top$. The covariance matrix $\hat{\Sigma}_t \in \mathbb{R}^{2\times2}$ has the form $\hat{\Sigma}_t = \begin{bmatrix} \hat{\sigma}_{x,t}^2 & \hat{\sigma}_{xy,t} \\ \hat{\sigma}_{xy,t} & \hat{\sigma}_{y,t}^2 \end{bmatrix},$ where $\hat{\sigma}_{x,t}^2$ and $\hat{\sigma}_{y,t}^2$ are the variances of the particle positions along the $x$ and $y$ axes, and $\hat{\sigma}_{xy,t}$ is their covariance. It is summarized by the scalar uncertainty $\sigma_t = \sqrt{\mathrm{tr}(\hat{\Sigma}_t)} = \sqrt{\hat{\sigma}_{x,t}^2+\hat{\sigma}_{y,t}^2},$ where $\mathrm{tr}(\cdot)$ denotes the matrix trace, i.e., the sum of the diagonal entries. Thus, $\sigma_t$ is the root-mean-square spread of the particles around the estimate. A small $\sigma_t$ indicates a concentrated belief, whereas a large or growing $\sigma_t$ indicates insufficient measurement correction (e.g., many rejected beams) or filter divergence. Since particle filters tend to be overconfident, $\sigma_t$ is interpreted as the uncertainty reported by the filter rather than a guaranteed error bound. $\sigma_t$ describes how uncertain the filter currently is. The pair $(\hat{\mathbf{x}}_t, \hat{\Sigma}_t)$ can be passed to downstream modules (e.g., a controller or planner).

\textit{Empty beam set ($Q_t=\emptyset$):} If all beams are rejected, the geometric-mean likelihood is undefined and the update becomes motion-only. Particles are propagated, weights are reset to $1/N$, and resampling is skipped. Since no valid measurement likelihood is available, the running likelihood averages remain unchanged and $p_{\mathrm{random}}=0$, so no recovery particles are injected. Consequently, persistent beam rejection caused by a large localization error may delay recovery, since the divergence detector receives no valid likelihood update. This case occurred rarely in our experiments. A possible extension is to monitor consecutive empty-beam updates and, after a threshold, temporarily include all in-range beams in weighting and divergence detection to enable recovery.

\section{Theoretical Analysis}
\label{sec:theoretical_analysis}

AF-MCL classifies beams only at the previous position estimate, avoiding
particle-wise ray casting but potentially reducing the geometric
information available after inconsistent beams are rejected. We
characterize these effects through computational complexity and the rank
and conditioning of the retained measurement geometry. The latter
analysis is local and is not a convergence analysis of the particle
filter. Let \(M\) and \(N\) denote the numbers of sensors and particles,
\(Q_t\subseteq\{1,\ldots,M\}\) the retained beam set, and
\(C_{\mathrm{ray}}\) the cost of one ray cast. The cost of an Euclidean distance transform lookup and
single-beam likelihood evaluation are assumed to have bounded cost
independent of \(N\).

\subsection{Computational Complexity}
\label{subsec:complexity}

AF-MCL classifies each beam once at
\(\widehat{\mathbf{p}}_{t-1}\), rather than independently at every
particle.

\begin{proposition}[Computational complexity of AF-MCL]
\label{prop:complexity}
One AF-MCL update has complexity
\[
\mathcal{O}\!\left(MC_{\mathrm{ray}}+N|Q_t|+N\right)
\subseteq
\mathcal{O}\!\left(MC_{\mathrm{ray}}+NM+N\right).
\]
In contrast, particle-wise beam classification requires
\(\mathcal{O}(NMC_{\mathrm{ray}}+NM+N)\). Thus, the proposed strategy
reduces the ray-casting cost from
\(\mathcal{O}(NMC_{\mathrm{ray}})\) to
\(\mathcal{O}(MC_{\mathrm{ray}})\).
\end{proposition}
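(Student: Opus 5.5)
The plan is to walk through Algorithm~\ref{alg:amcl} stage by stage, bound the cost of each stage under the stated assumptions, and add the bounds. The assumptions are that one EDT lookup and one single-beam likelihood evaluation cost $\mathcal{O}(1)$, independent of $N$, and that one ray cast costs $C_{\mathrm{ray}}$. First, the static-beam classification performs one ray cast per sensor from $\hat{\mathbf{x}}_{t-1}$, which gives $M C_{\mathrm{ray}}$. It then evaluates the posterior hit ratio for each sensor: $p_{\mathrm{hit}}$, $p_{\mathrm{short}}$ and the fixed max/random components are closed-form, so this costs $\mathcal{O}(M)$ and is absorbed into $\mathcal{O}(MC_{\mathrm{ray}})$, using $C_{\mathrm{ray}}\ge 1$. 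Second, the motion update performs one Gaussian draw and one rotation per particle, costing $\mathcal{O}(N)$.

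Third, consider the two branches. If $Q_t=\emptyset$, the algorithm only resets the weights, at cost $\mathcal{O}(N)$. Otherwise, for each particle it computes $|Q_t|$ beam endpoints, $|Q_t|$ EDT lookups and log-likelihoods, one average and one exponential, which is $\mathcal{O}(N|Q_t|)$ in total. The normalization, the computation of $\bar{\ell}_t$, and the two moving-average updates together with $p_{\mathrm{random}}$ cost $\mathcal{O}(N)+\mathcal{O}(1)$. For augmented resampling, I would invoke the standard fact that systematic low-variance resampling of $K_{\mathrm{res}}\le N$ particles runs in $\mathcal{O}(N)$: one prefix sum plus a single monotone sweep with one uniform offset. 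Drawing $K_{\mathrm{rand}}\le N$ uniform free-space samples is $\mathcal{O}(N)$ provided each draw costs $\mathcal{O}(1)$ in expectation. Finally, the mean, the $2\times 2$ covariance and the trace cost $\mathcal{O}(N)$. Summing the stages gives $\mathcal{O}(MC_{\mathrm{ray}}+N|Q_t|+N)$, and the inclusion in $\mathcal{O}(MC_{\mathrm{ray}}+NM+N)$ follows from $|Q_t|\le M$.

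For the contrast, I would analyze the per-particle variant in the same way. The ray cast and classification are repeated for every particle and every sensor, giving $\mathcal{O}(NMC_{\mathrm{ray}})$. The likelihood step is still $\mathcal{O}(NM)$ in the worst case, and the remaining stages are unchanged at $\mathcal{O}(N)$. Comparing only the ray-casting terms then yields the stated reduction from $\mathcal{O}(NMC_{\mathrm{ray}})$ to $\mathcal{O}(MC_{\mathrm{ray}})$.

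The main obstacle is justifying the $\mathcal{O}(N)$ cost of recovery-particle generation. Rejection sampling from $\mathcal{U}(m)$ restricted to free space has expected cost $1/\rho$ per sample, where $\rho$ is the free-space fraction of the map. I would handle this by noting that $\rho$ is a fixed property of $m$, independent of $N$, so the expected cost is $\mathcal{O}(N/\rho)=\mathcal{O}(N)$. Alternatively, one can precompute a list of free cells and sample from it directly, which gives deterministic $\mathcal{O}(1)$ per draw. Beyond that, the argument only requires stating clearly that map-dependent constants such as $\rho$ and $C_{\mathrm{ray}}$ are treated as independent of $N$.
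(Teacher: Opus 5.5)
Your proposal is correct and follows the same stage-by-stage accounting as the paper's proof: $M$ ray casts for classification, $\mathcal{O}(N|Q_t|)$ for weighting, $\mathcal{O}(N)$ for the remaining particle-wise operations, and $NM$ ray casts for the particle-wise variant. Your additional care with the $\mathcal{O}(N)$ cost of systematic resampling and of rejection-sampling recovery particles (treating the free-space fraction of $m$ as an $N$-independent constant, or precomputing free cells) fills in details that the paper groups under ``remaining particle-wise operations.''
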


\begin{proof}
Classification performs \(M\) ray casts and costs
\(\mathcal{O}(MC_{\mathrm{ray}})\). Particle propagation and the
remaining particle-wise operations cost \(\mathcal{O}(N)\), while
weighting costs \(\mathcal{O}(N|Q_t|)\). Combining these terms gives
the stated complexity, with \(|Q_t|\leq M\). Particle-wise
classification instead requires \(NM\) ray casts, yielding the stated
\(\mathcal{O}(NMC_{\mathrm{ray}}+NM+N)\) cost.
\end{proof}

Hence, the additional ray-casting cost introduced by beam
classification is independent of \(N\). With \(M=4\), AF-MCL performs
four classification ray casts per update.

\subsection{First-Order Geometry of the Retained Beams}
\label{subsec:first_order_geometry}

Filtering removes measurements inconsistent with the static map, but
also changes the geometric information available for localization. Let
\(\mathbf{p}_t=[x_t\;y_t]^\top\). Since \(\theta_t\) is measured, the
map-predicted range of sensor \(q\) can be written locally as
\(\widehat z_t^q=h_q(\mathbf{p}_t;\theta_t,m)\). For the retained set,
define
\[
\mathbf{h}_{Q_t}(\mathbf{p})
=\operatorname{col}_{q\in Q_t}h_q(\mathbf{p};\theta_t,m),
\quad
H_{Q_t}(\mathbf{p})
=\frac{\partial\mathbf{h}_{Q_t}}{\partial\mathbf{p}}
\in\mathbb{R}^{|Q_t|\times2}.
\]
Because ray-cast ranges are piecewise smooth, the following analysis
applies locally where the map boundary intersected by each retained
beam remains unchanged and the corresponding \(h_q\) are continuously
differentiable.

We call the position \emph{first-order geometrically identifiable} at
\(\mathbf{p}^{\star}\) if
\(H_{Q_t}(\mathbf{p}^{\star})\mathbf{v}=\mathbf{0}\) implies
\(\mathbf{v}=\mathbf{0}\).

\begin{proposition}[First-order geometric identifiability]
\label{prop:observability}
The position is first-order geometrically identifiable at
\(\mathbf{p}^{\star}\) if and only if
\[
\operatorname{rank}H_{Q_t}(\mathbf{p}^{\star})=2,
\]
equivalently, if two retained beams \(i,j\in Q_t\) have non-collinear
gradients,
\(\nabla h_i(\mathbf{p}^{\star})\not\parallel
\nabla h_j(\mathbf{p}^{\star})\).
\end{proposition}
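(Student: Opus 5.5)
The argument is linear algebra applied to the Jacobian $H_{Q_t}(\mathbf{p}^{\star})\in\mathbb{R}^{|Q_t|\times 2}$. By the definition just given, first-order geometric identifiability at $\mathbf{p}^{\star}$ means that $H_{Q_t}(\mathbf{p}^{\star})\mathbf{v}=\mathbf{0}$ forces $\mathbf{v}=\mathbf{0}$, i.e. that the null space of $H_{Q_t}(\mathbf{p}^{\star})$ is trivial. We work in the local regime described above, where each retained $h_q$ is continuously differentiable at $\mathbf{p}^{\star}$, so every row $\nabla h_q(\mathbf{p}^{\star})^\top$, $q\in Q_t$, is well defined.

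\textbf{Step 1 (kernel and rank).} By the rank--nullity theorem applied to $H_{Q_t}(\mathbf{p}^{\star})$, viewed as a map from $\mathbb{R}^2$,
\[
\operatorname{rank}H_{Q_t}(\mathbf{p}^{\star})+\dim\ker H_{Q_t}(\mathbf{p}^{\star})=2 .
\]
Hence the kernel is trivial if and only if the rank equals $2$, which is the first equivalence.

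\textbf{Step 2 (rank and gradients).} The rank of a matrix equals the dimension of its row space, here the span of $\{\nabla h_q(\mathbf{p}^{\star}) : q\in Q_t\}$ in $\mathbb{R}^2$. If two retained gradients $\nabla h_i(\mathbf{p}^{\star})$ and $\nabla h_j(\mathbf{p}^{\star})$ are non-collinear, they are linearly independent and already span $\mathbb{R}^2$, so the rank is $2$. Conversely, suppose every pair of retained gradients is collinear, reading "non-collinear" as "linearly independent", so that a zero gradient counts as collinear with any vector. Then all gradients lie on a single line through the origin, or all vanish, and the rank is at most $1$.

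\textbf{Degenerate cases.} The main point needing care is the converse in Step 2, together with the case $|Q_t|\le 1$. If $|Q_t|=0$, the matrix has no rows, the rank is $0$, and no pair exists; if $|Q_t|=1$, the rank is at most $1$ and again no pair exists. In both cases both sides of the equivalence fail, so the statement holds consistently. It is worth adding a remark that, for a ray-cast range with unit beam direction $\mathbf{d}_q$ hitting a boundary with normal $\mathbf{n}_q$, one has $\nabla h_q=-\mathbf{n}_q/(\mathbf{n}_q^\top\mathbf{d}_q)$. This gives geometric meaning to the criterion: the retained beams must hit non-parallel boundary segments. The remark is interpretive only and is not needed for the proof.
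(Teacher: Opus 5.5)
Your proposal is correct and follows essentially the paper's argument: trivial null space of $H_{Q_t}(\mathbf{p}^{\star})$, hence rank two for a two-dimensional position, with the rank read off the row space spanned by the retained gradients. Your explicit rank--nullity step and your treatment of the converse and of $|Q_t|\le 1$ spell out what the paper's short proof leaves implicit; the interpretive formula $\nabla h_q=-\mathbf{n}_q/(\mathbf{n}_q^\top\mathbf{d}_q)$ is correct for fixed heading but, as you say, is not needed.
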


\begin{proof}
First-order identifiability is equivalent to
\(\operatorname{null}H_{Q_t}=\{\mathbf{0}\}\), which, for a
two-dimensional position, holds if and only if
\(\operatorname{rank}H_{Q_t}=2\). Since the rows of \(H_{Q_t}\) are
the transposed range gradients, this is equivalent to at least two
non-collinear retained gradients.
\end{proof}

Thus, the number of retained beams alone does not determine the
available geometric information. For example, retaining only the front
and back beams can yield anti-parallel gradients and
\(\operatorname{rank}H_{Q_t}=1\), leaving one position direction
unresolved to first order. In particular, \(|Q_t|<2\) cannot provide
rank two, while \(|Q_t|\geq2\) may still be rank deficient if the
retained gradients are collinear.

\subsection{Conditioning and Local Measurement Sensitivity}
\label{subsec:conditioning}

Full rank establishes first-order identifiability but not its
conditioning. The smallest singular value
\(\sigma_{\min}(H_{Q_t})\) quantifies the weakest geometric constraint,
with values approaching zero indicating increasing sensitivity to
measurement perturbations.

Consider
\(\mathbf{z}_{Q_t}=\mathbf{h}_{Q_t}(\mathbf{p}^{\star})
+\boldsymbol{\eta}_t\), where \(\boldsymbol{\eta}_t\) includes range
noise, map error, and residual measurement inconsistency. For a local
position \(\widehat{\mathbf{p}}\), define
\(\widetilde{\mathbf{p}}=\widehat{\mathbf{p}}-\mathbf{p}^{\star}\) and
\(\mathbf{r}=\mathbf{z}_{Q_t}
-\mathbf{h}_{Q_t}(\widehat{\mathbf{p}})\).

\begin{proposition}[Local sensitivity of the retained measurements]
\label{prop:error_bound}
Under the assumptions of Proposition~\ref{prop:observability}, if
\(\sigma_{\min}(H_{Q_t}(\mathbf{p}^{\star}))>0\), then, as
\(\|\widetilde{\mathbf{p}}\|_2\rightarrow0\),
\begin{equation}
\|\widetilde{\mathbf{p}}\|_2
\leq
\frac{\|\boldsymbol{\eta}_t\|_2+\|\mathbf{r}\|_2}
{\sigma_{\min}(H_{Q_t}(\mathbf{p}^{\star}))}
+
\mathcal{O}\!\left(
\frac{\|\widetilde{\mathbf{p}}\|_2^2}
{\sigma_{\min}(H_{Q_t}(\mathbf{p}^{\star}))}
\right).
\label{eq:local_sensitivity_bound}
\end{equation}
For an exact local fit (\(\mathbf{r}=\mathbf{0}\)), the numerator of
the leading term reduces to \(\|\boldsymbol{\eta}_t\|_2\).
\end{proposition}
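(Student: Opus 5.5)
The plan is to Taylor-expand \(\mathbf{h}_{Q_t}\) about \(\mathbf{p}^{\star}\) and then invert the Jacobian through its smallest singular value. Under the standing assumptions of Proposition~\ref{prop:observability}, each retained \(h_q\) is continuously differentiable near \(\mathbf{p}^{\star}\). I will additionally assume that its gradient is locally Lipschitz, or \(C^2\), on the local region where the intersected map boundary is unchanged. With this, the standard first-order expansion gives
\[
\mathbf{h}_{Q_t}(\widehat{\mathbf{p}})
=\mathbf{h}_{Q_t}(\mathbf{p}^{\star})
+H_{Q_t}(\mathbf{p}^{\star})\widetilde{\mathbf{p}}
+\boldsymbol{\rho}(\widetilde{\mathbf{p}}),
\qquad
\|\boldsymbol{\rho}(\widetilde{\mathbf{p}})\|_2\le L\|\widetilde{\mathbf{p}}\|_2^2 .
\]
Here \(L\) depends only on local curvature bounds of the retained \(h_q\) and on \(|Q_t|\le M\).

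The second step combines the measurement model with the residual definition. From \(\mathbf{z}_{Q_t}=\mathbf{h}_{Q_t}(\mathbf{p}^{\star})+\boldsymbol{\eta}_t\) and \(\mathbf{r}=\mathbf{z}_{Q_t}-\mathbf{h}_{Q_t}(\widehat{\mathbf{p}})\), subtraction yields
\[
H_{Q_t}(\mathbf{p}^{\star})\widetilde{\mathbf{p}}
=\boldsymbol{\eta}_t-\mathbf{r}-\boldsymbol{\rho}(\widetilde{\mathbf{p}}).
\]

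The third step uses the hypothesis \(\sigma_{\min}(H_{Q_t}(\mathbf{p}^{\star}))>0\). By Proposition~\ref{prop:observability} this means the rank is \(2\), so \(H_{Q_t}\) is a tall matrix with full column rank. Such a matrix satisfies \(\|H_{Q_t}\mathbf{v}\|_2\ge\sigma_{\min}\|\mathbf{v}\|_2\) for every \(\mathbf{v}\in\mathbb{R}^2\), which follows from the SVD or from \(\mathbf{v}^\top H^\top H\mathbf{v}\ge\sigma_{\min}^2\|\mathbf{v}\|^2\). Applying this to \(\widetilde{\mathbf{p}}\) and using the triangle inequality on the right-hand side gives
\[
\sigma_{\min}\|\widetilde{\mathbf{p}}\|_2\le\|\boldsymbol{\eta}_t\|_2+\|\mathbf{r}\|_2+L\|\widetilde{\mathbf{p}}\|_2^2 .
\]
Dividing by \(\sigma_{\min}\) produces \eqref{eq:local_sensitivity_bound}. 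Setting \(\mathbf{r}=\mathbf{0}\) gives the exact-fit remark immediately.

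The main obstacle is justifying the quadratic remainder uniformly. Ray-cast ranges are only piecewise smooth, so I must restrict \(\widetilde{\mathbf{p}}\) to a neighborhood in which every retained beam hits the same smooth boundary segment. For a planar wall this is clean: \(h_q\) is then a smooth function of position (affine, in fact, since \(\theta_t\) is fixed), with bounded Hessian. The limit \(\|\widetilde{\mathbf{p}}\|_2\to0\) ensures we eventually lie in this neighborhood. If only \(C^1\) regularity is available, the remainder degrades to \(o(\|\widetilde{\mathbf{p}}\|_2)\), so the stated \(\mathcal{O}(\cdot)\) form requires the stronger local regularity, and I would state that assumption explicitly.
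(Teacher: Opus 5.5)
Your proof is correct and follows essentially the same route as the paper: Taylor-expand \(\mathbf{h}_{Q_t}\) about \(\mathbf{p}^{\star}\) and rearrange to \(H_{Q_t}\widetilde{\mathbf{p}}=\boldsymbol{\eta}_t-\mathbf{r}+\mathcal{O}(\|\widetilde{\mathbf{p}}\|_2^2)\). Then invert through \(\sigma_{\min}\); the paper does this by left-multiplying with \(H_{Q_t}^{\dagger}\) and using \(\|H_{Q_t}^{\dagger}\|_2=1/\sigma_{\min}\), which is equivalent to your bound \(\|H_{Q_t}\mathbf{v}\|_2\ge\sigma_{\min}\|\mathbf{v}\|_2\). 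Your explicit Lipschitz-gradient (or \(C^2\)) assumption is a worthwhile tightening, because the paper's stated \(C^1\) hypothesis alone only gives an \(o(\|\widetilde{\mathbf{p}}\|_2)\) remainder rather than the claimed \(\mathcal{O}(\|\widetilde{\mathbf{p}}\|_2^2)\) term.
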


\begin{proof}
A Taylor expansion about \(\mathbf{p}^{\star}\) gives
\[
\mathbf{h}_{Q_t}(\widehat{\mathbf{p}})
=
\mathbf{h}_{Q_t}(\mathbf{p}^{\star})
+
H_{Q_t}(\mathbf{p}^{\star})\widetilde{\mathbf{p}}
+
\mathcal{O}(\|\widetilde{\mathbf{p}}\|_2^2),
\]
and therefore
\[
H_{Q_t}(\mathbf{p}^{\star})\widetilde{\mathbf{p}}
=
\boldsymbol{\eta}_t-\mathbf{r}
+\mathcal{O}(\|\widetilde{\mathbf{p}}\|_2^2).
\]
Since \(H_{Q_t}\) has full column rank,
\(\|H_{Q_t}^{\dagger}\|_2=1/\sigma_{\min}(H_{Q_t})\).
Left-multiplication by \(H_{Q_t}^{\dagger}\) and the triangle
inequality yield \eqref{eq:local_sensitivity_bound}.
\end{proof}

Proposition~\ref{prop:error_bound} shows that larger
\(\sigma_{\min}(H_{Q_t})\) corresponds to stronger geometric
constraints, whereas values approaching zero imply greater sensitivity
to measurement perturbations; at zero, the retained geometry is rank
deficient. This characterizes the instantaneous retained measurement
geometry, not the AF-MCL estimation error, which also depends on prior
and motion information.


\section{Results and Discussion}
\label{result_and_discussion}
The MCL, A-MCL (augmentation only), F-MCL (beam filtering only), R-MCL (Reliable MCL~\cite{akai2023reliable}), and the full AF-MCL algorithms are evaluated in simulation, and AF-MCL is further validated in real-world experiments. The UAV follows its nonlinear dynamics under low-level PID control, with constant altitude, forward velocity, zero lateral velocity, and angular velocity references.
\subsection{Simulation Setup}

Simulations were run using a Crazyflie~2.1 model in Webots 2024a\footnote{\url{https://cyberbotics.com}} on Ubuntu 24.04 LTS (Intel Core i7, 16~GB RAM). The UAV flew at a constant altitude of 0.3~m with linear and angular velocities of 0.3~m/s and 0.4~rad/s, respectively. The filter was updated at 10~Hz ($\Delta t=0.1$~s) on an occupancy grid with 2~cm resolution, and the EDT was truncated at the sensor range limit $r_{\max}=10$~m. The four range sensors were mounted at the UAV center with offsets $\phi_q=0,\ \pi/2,\ \pi,\ -\pi/2$~rad (front, left, back, right), and the heading $\theta_t$ was taken noise-free from the simulator. Range measurements were corrupted with additive Gaussian noise with a standard deviation of 2~cm to emulate realistic sensor uncertainty; missing, non-finite, non-positive, or out-of-range ($\ge r_{\max}$) readings were discarded before weighting. Particles were initialized from $\mathcal{N}(\bar{\mathbf{x}}_0,\Sigma_0)$ around the known start position with uniform weights. The simulation environment is shown in Fig.~\ref{fig:webots}; parameters, determined empirically by manual tuning, are listed in Table~\ref{tab:sim_params}. $\Sigma_0$ and $\Sigma_u$ are isotropic and specified by their standard deviations (in m and m/$\sqrt{\mathrm{s}}$, respectively). The slow and fast decay rates satisfy $\alpha_{\mathrm{slow}}\ll\alpha_{\mathrm{fast}}$, $\sigma_{\mathrm{obs}}$ is set above the sensor noise to tolerate map discretization and pose error, and $\nu$ caps the injection so that most particles are always importance-resampled.

\begin{table}
    \centering
    \caption{Simulation parameters.}
    \begin{tabularx}{\linewidth}{
       >{\hsize=0.05\hsize}X
       >{\hsize=0.08\hsize}X
       >{\hsize=0.05\hsize}X
       >{\hsize=0.1\hsize}X
       >{\hsize=0.1\hsize}X
       >{\hsize=0.05\hsize}X
       >{\hsize=0.05\hsize}X
       >{\hsize=0.1\hsize}X
       >{\hsize=0.1\hsize}X
       >{\hsize=0.1\hsize}X
       >{\hsize=0.1\hsize}X
       >{\hsize=0.1\hsize}X
       >{\hsize=0.05\hsize}X
       >{\hsize=0.05\hsize}X
      }
        \toprule
        $N$ & $z_{\max}$ & $z_{\mathrm{r}}$ & $z_{\mathrm{short}}$ & $z_{\mathrm{hit}}$ & $\Sigma_0$ & $\gamma$ & $\lambda_{\mathrm{short}}$ & $\Sigma_u$ & $\sigma_{\mathrm{obs}}$ & $\alpha_{\text{slow}}$ & $\alpha_{\text{fast}}$ & $\chi$ & $\nu$ \\
        \midrule
        100 & 0.1 & 0.1 & 0.1 & 0.7 & 0.2 & 0.5 & 4.0 & 0.03 & 0.15 & 0.001 & 0.9 & 0.4 & 0.3 \\
        \bottomrule
    \end{tabularx}
    \label{tab:sim_params}
\end{table}
\begin{figure}
  \centering
  \includegraphics[width=0.5\linewidth]{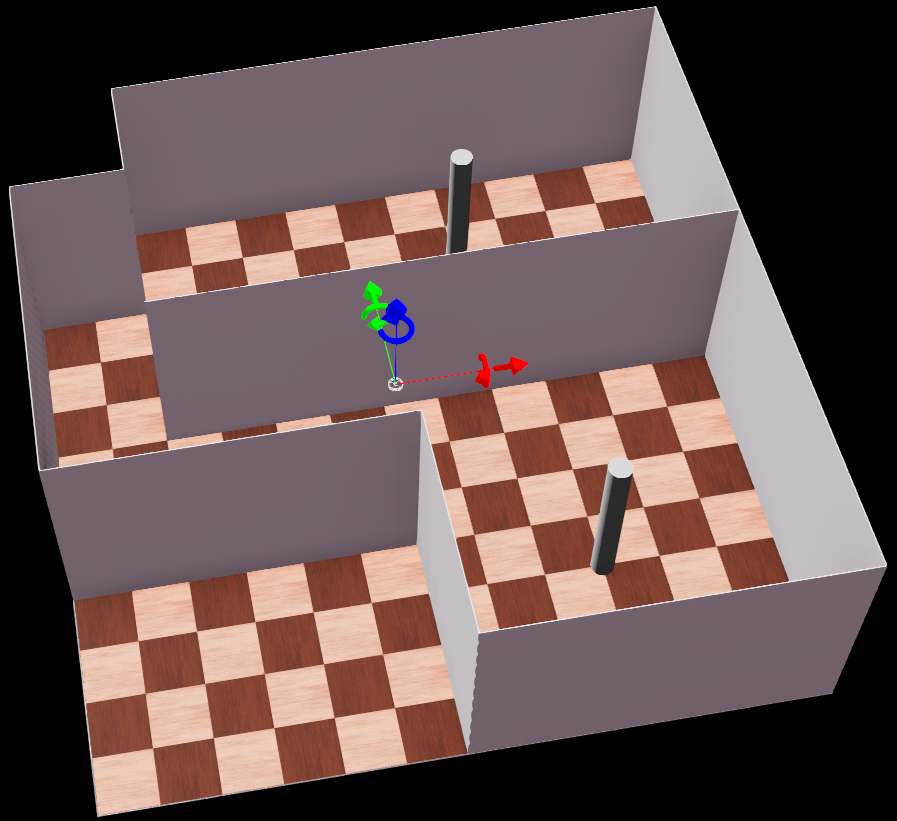}
  \caption{Webots simulation environment showing the Crazyflie UAV, static walls, and the dynamic obstacles used to test localization robustness.}
  \label{fig:webots}
\end{figure}

\subsection{Ablation and Comparative Study}
To evaluate each component's contribution, standard MCL, A-MCL (augmentation only), F-MCL (filtering only), the proposed AF-MCL, and R-MCL (Reliable MCL~\cite{akai2023reliable}) were evaluated using a 4-range LiDAR setup across square and S-shaped trajectories in $6\times6$\,m and $10\times10$\,m environments with dynamic obstacles (Fig.~\ref{fig:ablation_square_s-shape}, Table~\ref{tab:performance_results}).

\begin{figure}[t]
  \centering
  \begin{subfigure}{0.24\linewidth}
    \centering
    \includegraphics[width=\linewidth]{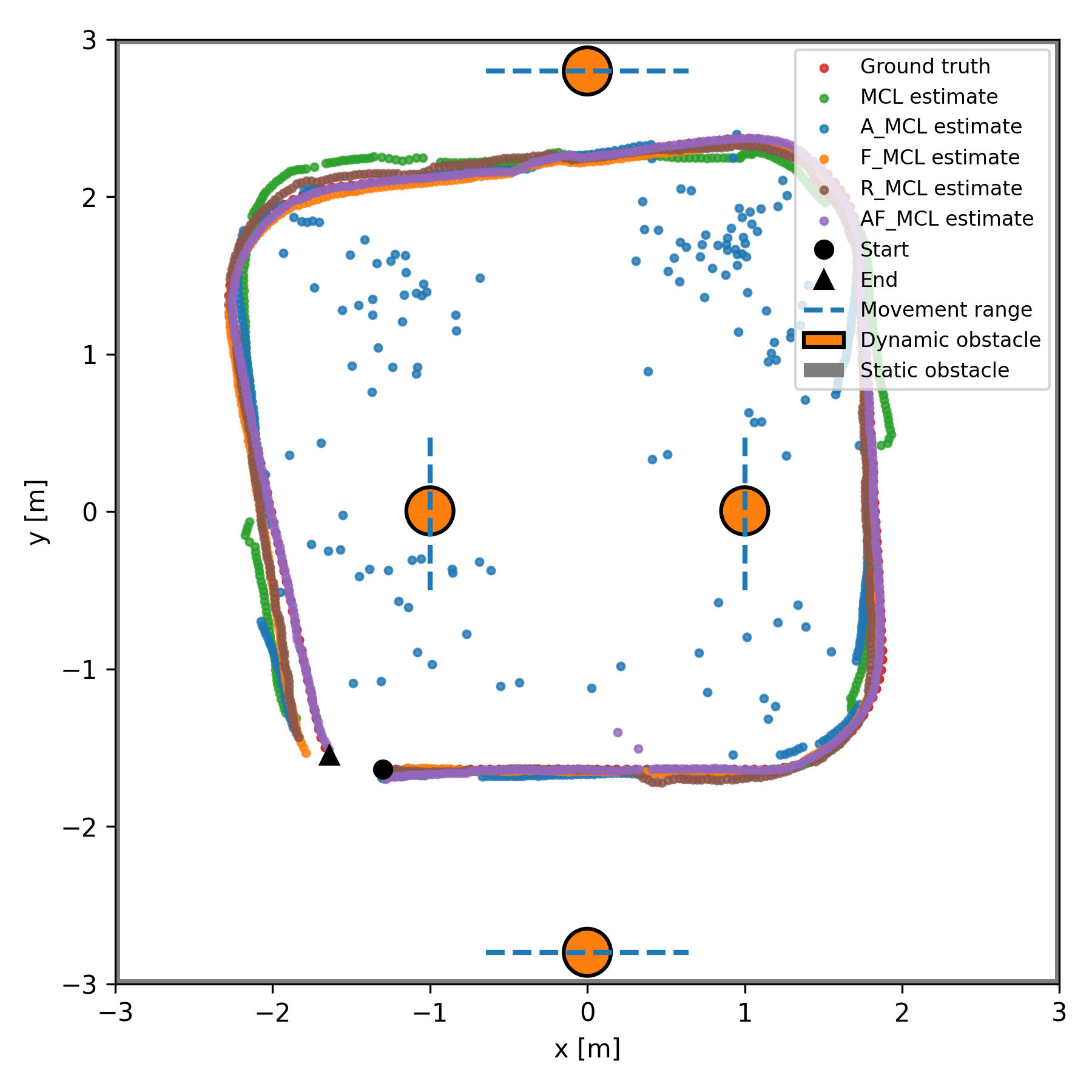}
    \caption{}
  \end{subfigure}
  \begin{subfigure}{0.24\linewidth}
    \centering
    \includegraphics[width=\linewidth]{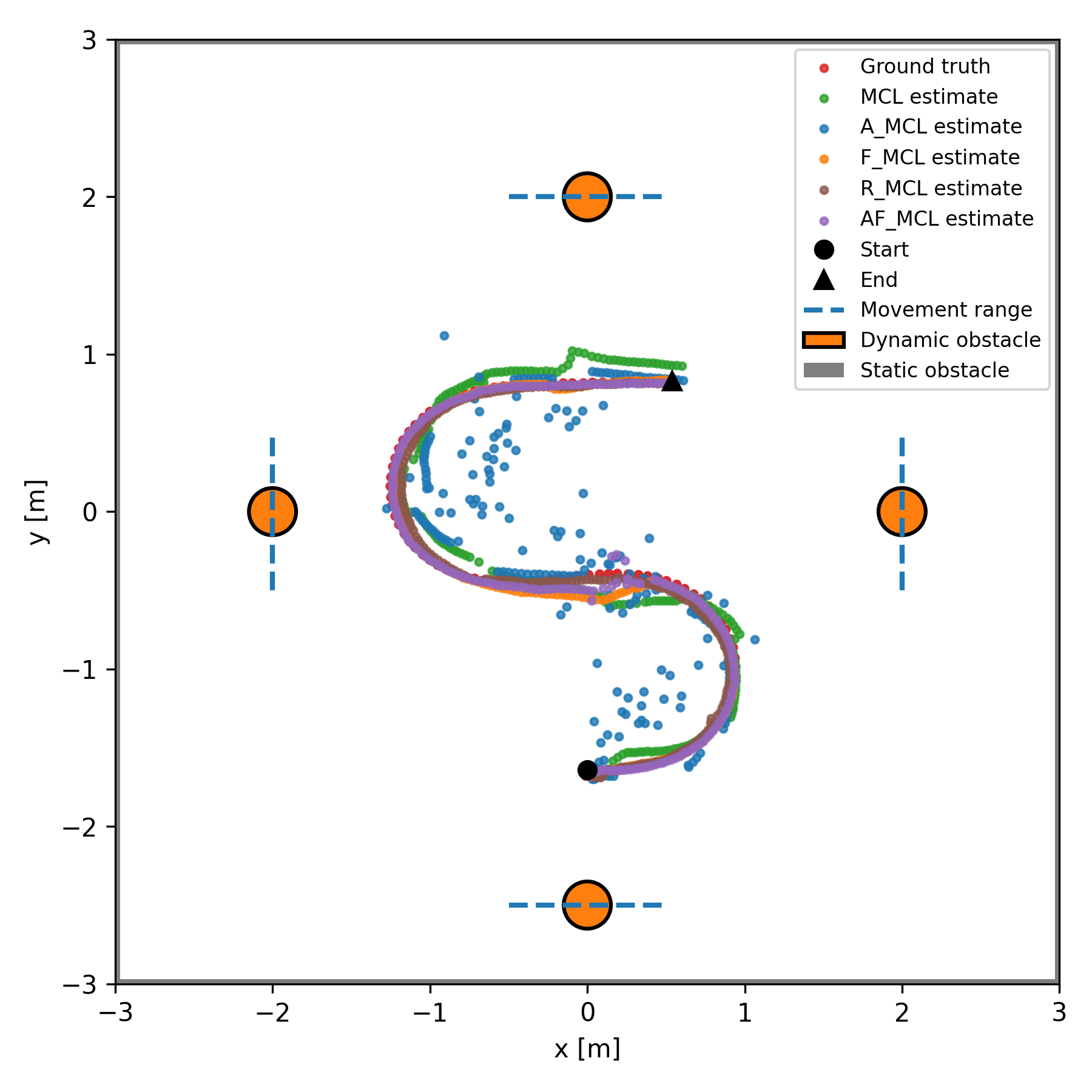}
    \caption{}
  \end{subfigure}
  \begin{subfigure}{0.24\linewidth}
    \centering
    \includegraphics[width=\linewidth]{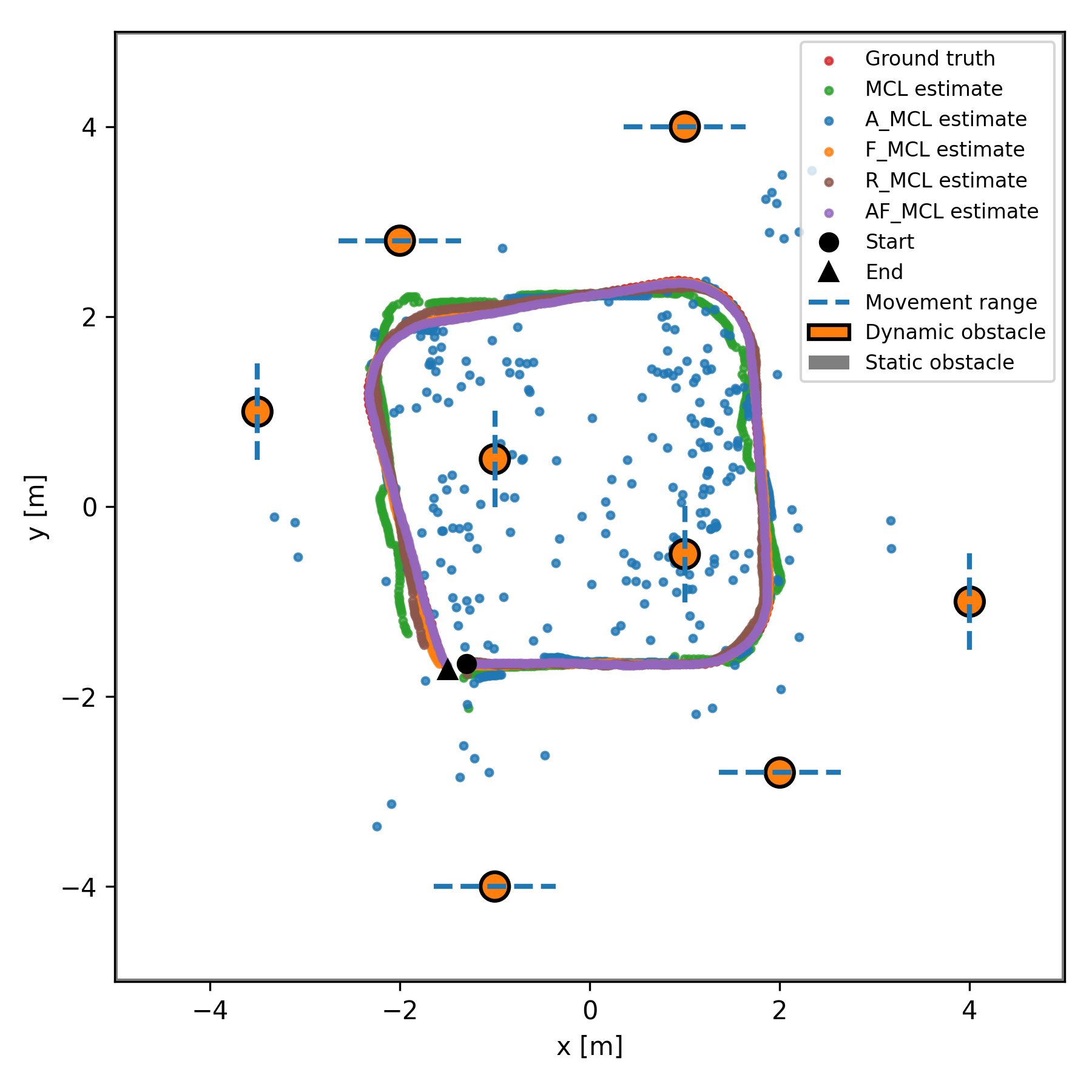}
    \caption{}
  \end{subfigure}
  \begin{subfigure}{0.24\linewidth}
    \centering
    \includegraphics[width=\linewidth]{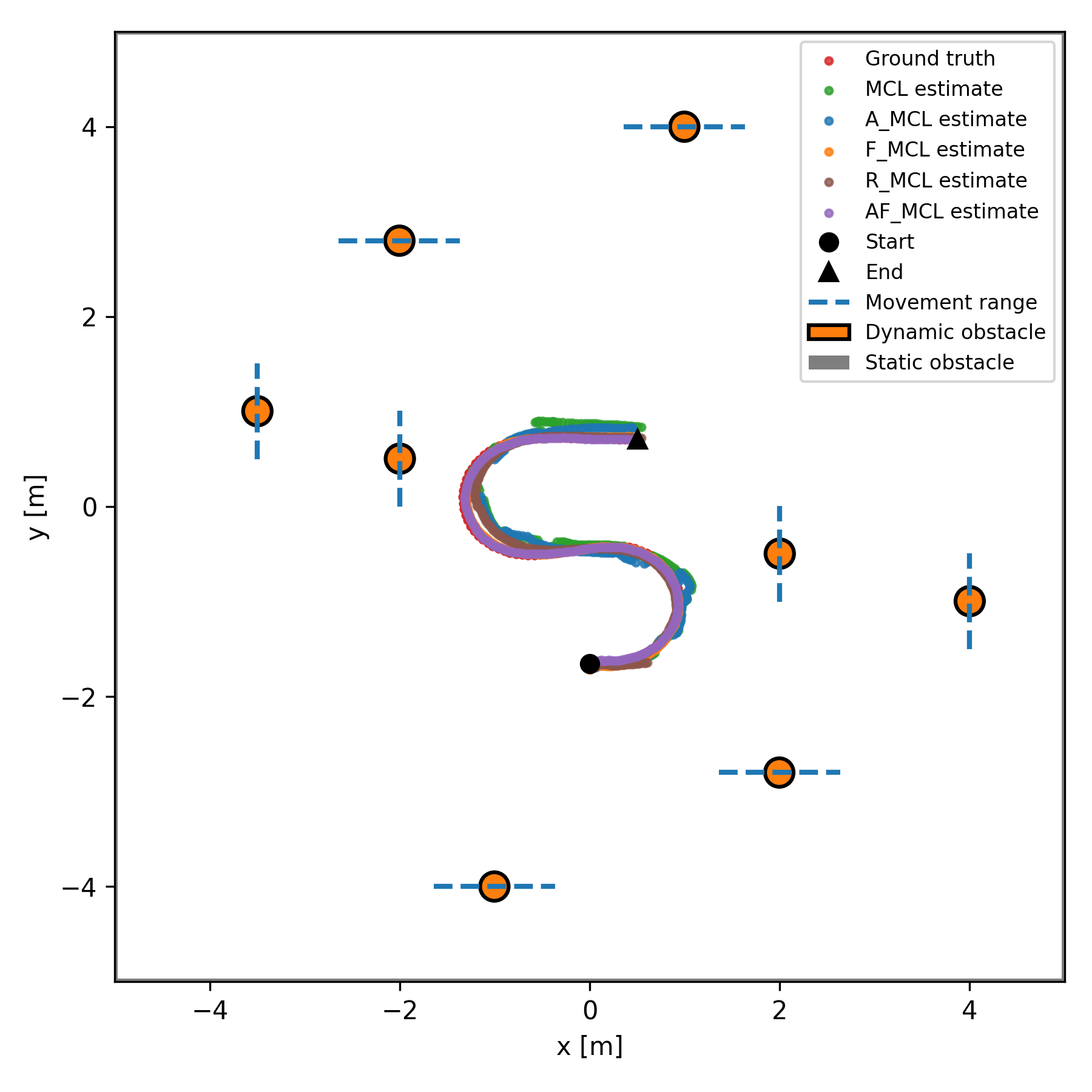}
    \caption{}
  \end{subfigure}
  \begin{subfigure}{0.24\linewidth}
    \centering
    \includegraphics[width=\linewidth]{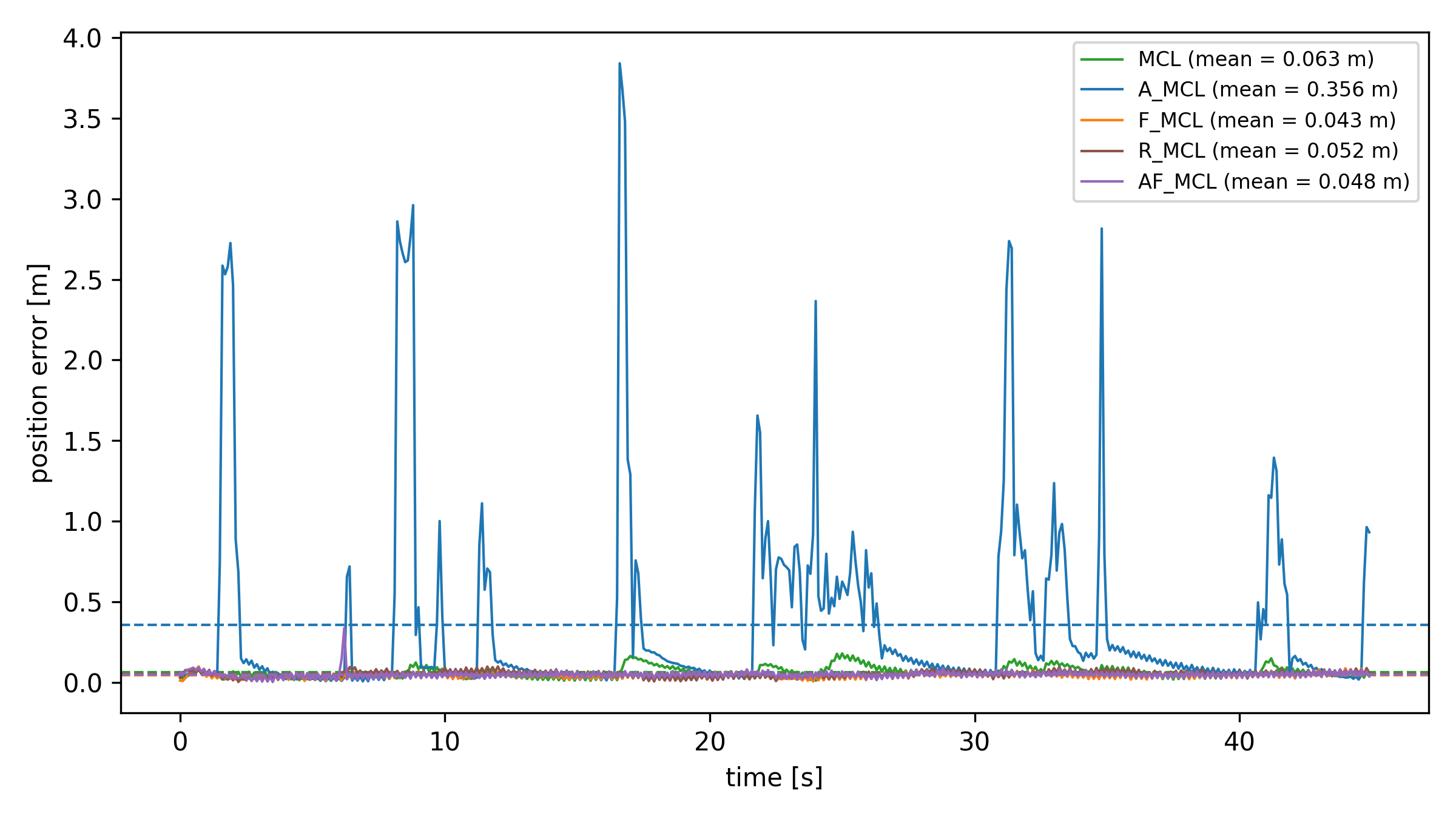}
    \caption{}
  \end{subfigure}
  \begin{subfigure}{0.24\linewidth}
    \centering
    \includegraphics[width=\linewidth]{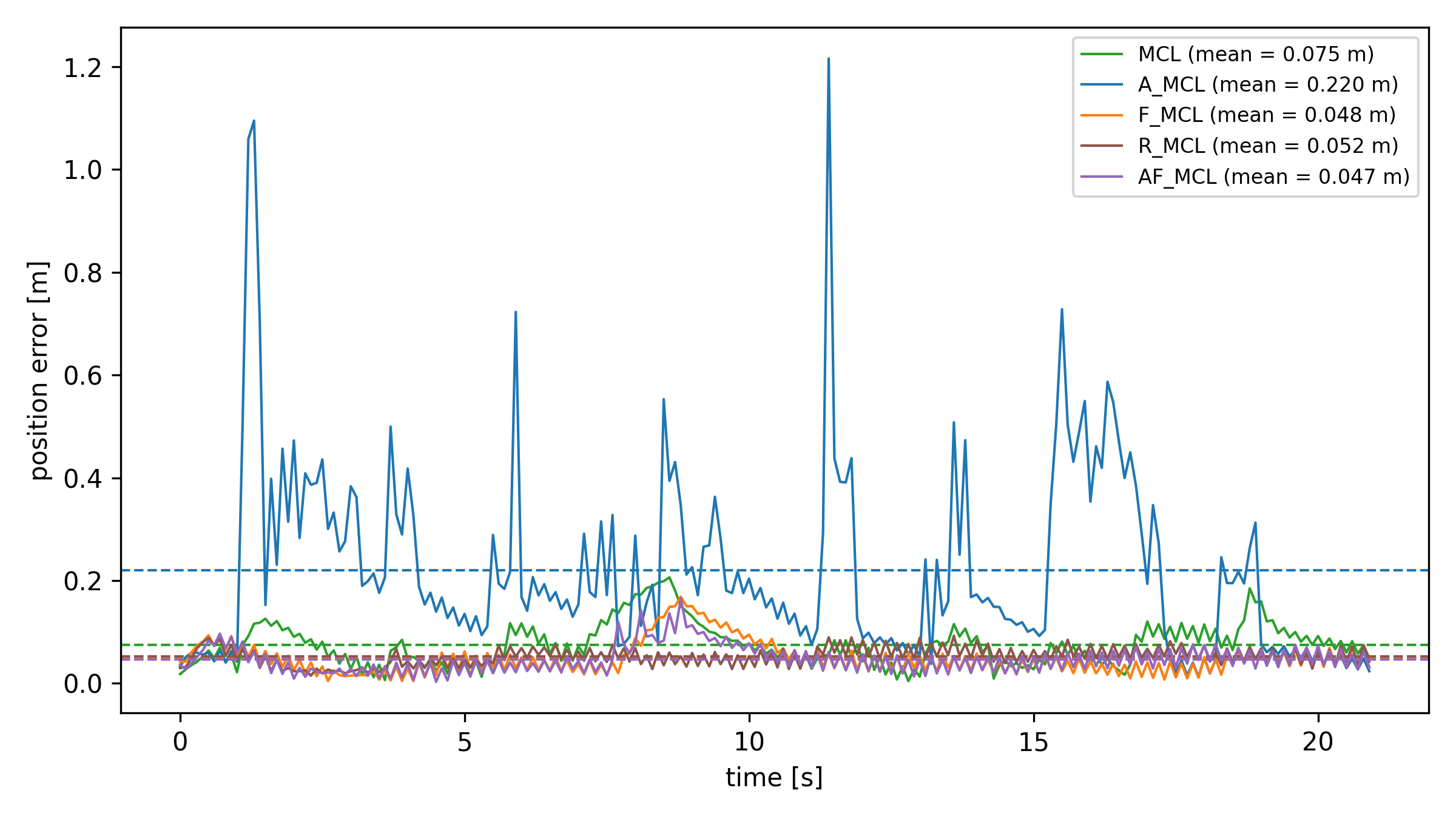}
    \caption{}
  \end{subfigure}
  \begin{subfigure}{0.24\linewidth}
    \centering
    \includegraphics[width=\linewidth]{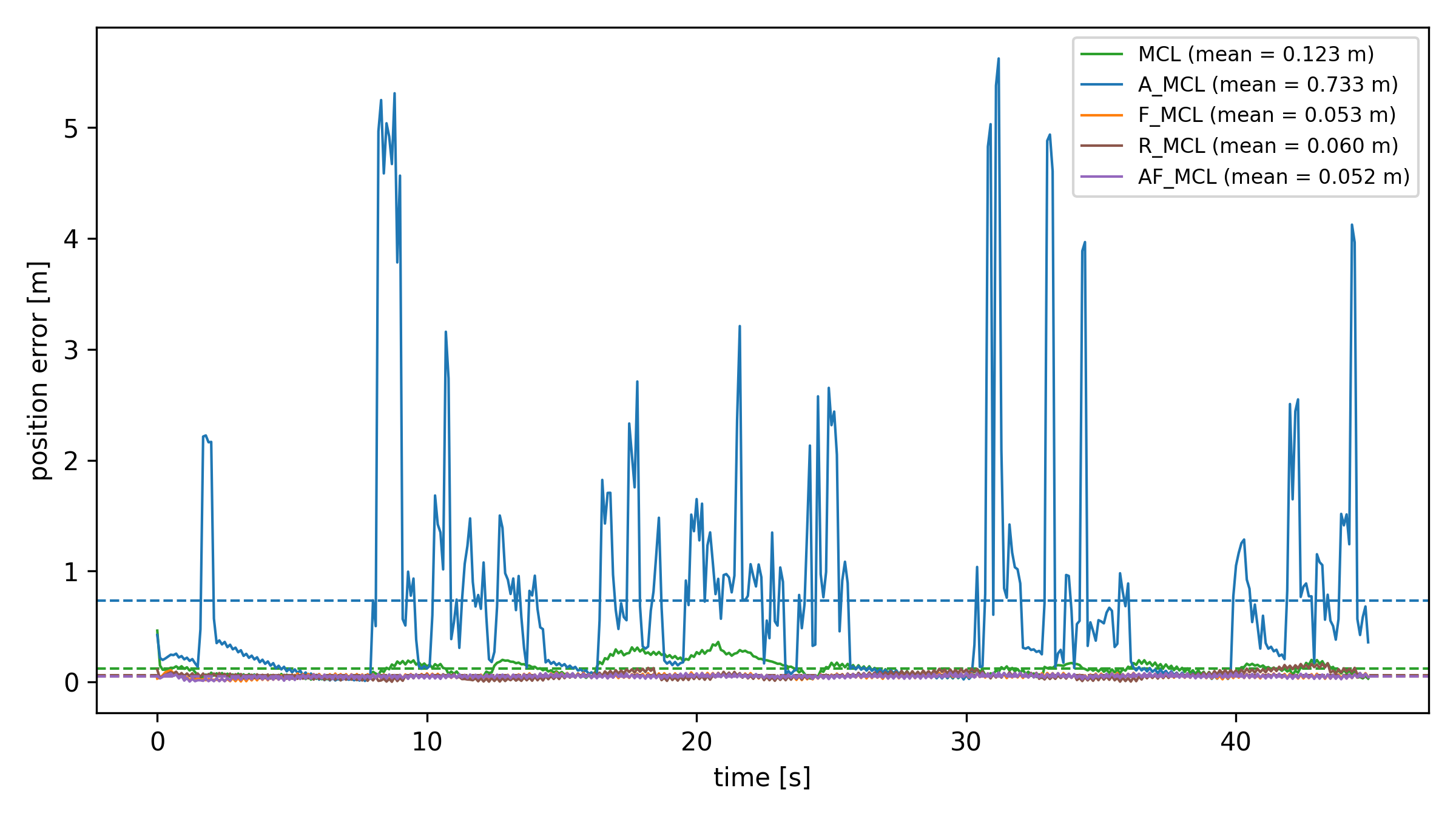}
    \caption{}
  \end{subfigure}
  \begin{subfigure}{0.24\linewidth}
    \centering
    \includegraphics[width=\linewidth]{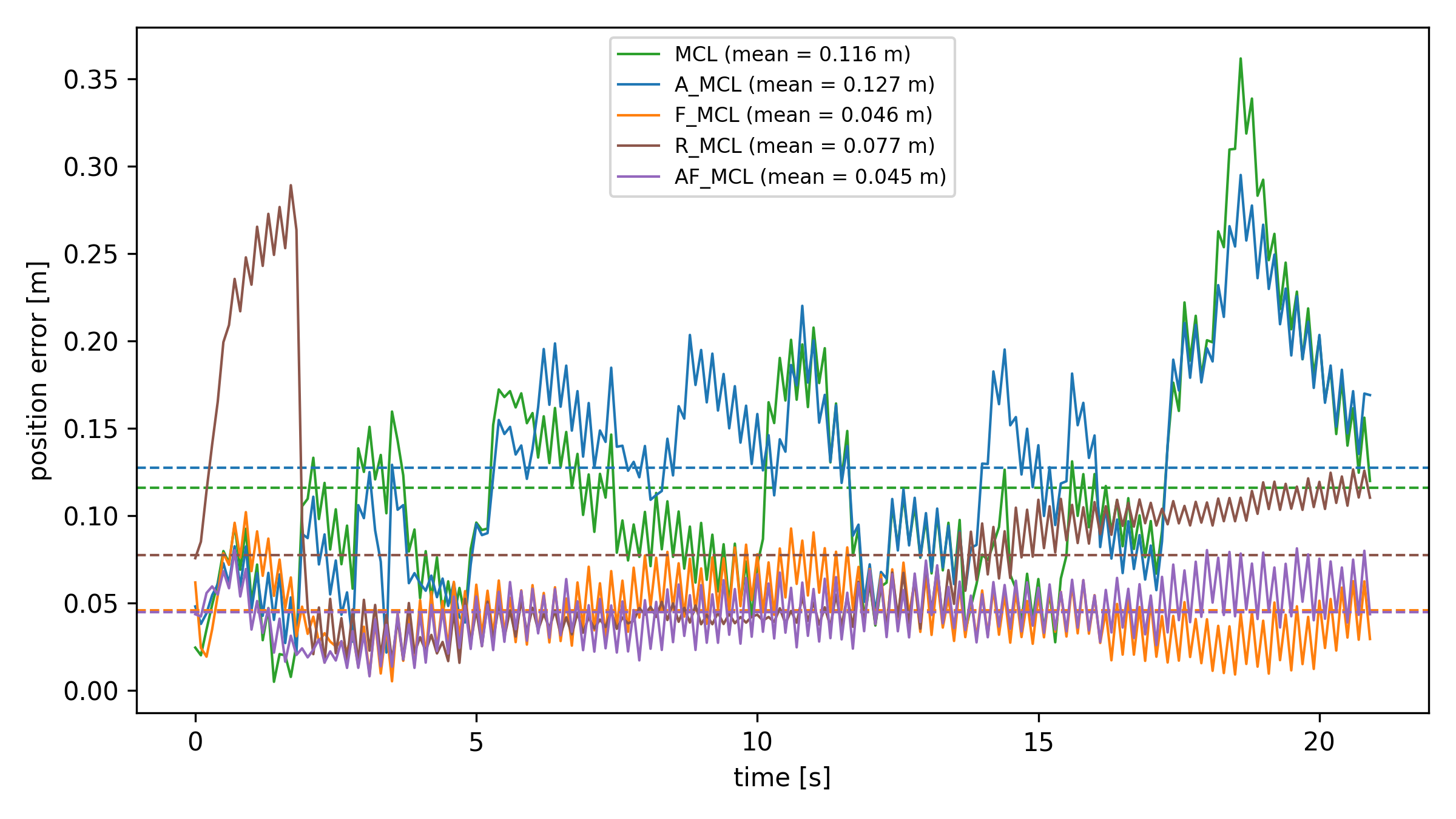}
    \caption{}
  \end{subfigure}
  \caption{Ablation results under dynamic obstacles: (a--d) estimated vs. ground-truth paths for square and S-shaped trajectories in $6\times6$\,m (a, b) and $10\times10$\,m (c, d) environments; (e--h) corresponding position errors over time.}
  \label{fig:ablation_square_s-shape}
\end{figure}

\begin{table}[t]
\centering
\caption{Position Error and Runtime Performance across Environments, Trajectories, and Methods}
\label{tab:performance_results}
\begin{tabularx}{\linewidth}{
   >{\hsize=0.12\hsize}X
  >{\hsize=0.12\hsize}X
  >{\hsize=0.21\hsize}X
  >{\hsize=0.27\hsize}X
  >{\hsize=0.32\hsize}X
  >{\hsize=0.19\hsize}X
  }
\toprule
\textbf{Env.} & \textbf{Traj.} & \textbf{Method} & \textbf{Mean Pos.} & \textbf{Mean} & \textbf{Mean $\sigma$} \\
 \text{[m]} &  &  & \textbf{Error} [m] & \textbf{Runtime} [ms] & [m] \\
\midrule
\multirow{10}{*}{$6\times6$} & \multirow{5}{*}{S-shape} 
    & MCL    & 0.075~$\pm$~0.041 & \textbf{0.68~$\pm$~0.67} & 0.048 \\
 &  & A-MCL  & 0.220~$\pm$~0.187 & 1.60~$\pm$~1.05 & 0.565\\
 &  & F-MCL  & 0.048~$\pm$~0.032 & 2.73~$\pm$~1.35 & 0.049 \\
 &  & R-MCL  & 0.052~$\pm$~0.018 & 2.10~$\pm$~1.15 & 0.049 \\
 &  & AF-MCL & \textbf{0.047~$\pm$~0.026} & 3.15~$\pm$~1.26 & 0.051 \\
\cmidrule{2-6}
 & \multirow{5}{*}{Square}  
    & MCL    & 0.063~$\pm$~0.034 & \textbf{1.04~$\pm$~0.41} & 0.049 \\
 &  & A-MCL  & 0.356~$\pm$~0.617 & 1.74~$\pm$~0.70 & 0.541 \\
 &  & F-MCL  & \textbf{0.043~$\pm$~0.019} & 3.12~$\pm$~0.92 & 0.048 \\
 &  & R-MCL  & 0.052~$\pm$~0.021 & 2.29~$\pm$~1.01 & 0.050 \\
 &  & AF-MCL & 0.048~$\pm$~0.023 & 3.48~$\pm$~0.88 & 0.047 \\
\midrule
\multirow{10}{*}{10$\times$10} & \multirow{5}{*}{S-shape} 
    & MCL    & 0.116~$\pm$~0.066 & \textbf{0.94~$\pm$~1.41} & 0.038 \\
 &  & A-MCL  & 0.127~$\pm$~0.057 & 1.29~$\pm$~2.61 & 0.038 \\
 &  & F-MCL  & 0.046~$\pm$~0.021 & 4.51~$\pm$~2.80 & 0.049 \\
 &  & R-MCL  & 0.077~$\pm$~0.056 & 1.90~$\pm$~1.97 & 0.149 \\
 &  & AF-MCL & \textbf{0.045~$\pm$~0.018} & 5.03~$\pm$~2.57 & 0.053\\
\cmidrule{2-6}
 & \multirow{5}{*}{Square}  
    & MCL    & 0.123~$\pm$~0.071 & \textbf{1.20~$\pm$~1.76} & 0.039 \\
 &  & A-MCL  & 0.733~$\pm$~1.048 & 1.83~$\pm$~1.71 &  1.314\\
 &  & F-MCL  & 0.053~$\pm$~0.019 & 4.81~$\pm$~0.95 & 0.049 \\
 &  & R-MCL  & 0.060~$\pm$~0.033 & 2.41~$\pm$~1.66 & 0.060 \\
 &  & AF-MCL & \textbf{0.052~$\pm$~0.019} & 4.86~$\pm$~1.71 & 0.050\\
\bottomrule
\end{tabularx}
\end{table}

Beam filtering drives accuracy under dynamic obstacles: F-MCL and AF-MCL achieve the lowest errors in both map sizes. Augmentation adds no measurable accuracy in this regime (AF-MCL and F-MCL differ by $\le$0.005\,m). Standalone augmentation (A-MCL) is harmful: readings from dynamic obstacles lower the likelihoods and trigger false divergence recovery, which scatters the particles. Filtering removes these readings, so injection is triggered mainly by genuine divergence. Augmentation therefore serves as a recovery mechanism for kidnapping or genuine filter failure, not as a source of accuracy during normal operation. The online uncertainty $\sigma$ is a reliable indicator only when beams are filtered. For F-MCL, AF-MCL, and R-MCL (except the $10\times10$\,m S-shape case), $\sigma$ is of the same magnitude as the mean error, so it provides a usable estimate of the localization uncertainty without ground truth. For standard MCL, $\sigma$ stays at about 0.04--0.05\,m while the error reaches 0.12\,m (about $3\times$ underestimated), so the particle cloud is overconfident when dynamic obstacles corrupt the likelihood. For A-MCL, false injections inflate $\sigma$ to 0.5--1.3\,m in three of four cases, which flags the unreliable estimate, but in the $10\times10$\,m S-shape case $\sigma$ stays low despite a large error. While R-MCL is more accurate than MCL and faster than AF-MCL, it requires extensive offline map preprocessing and memory-intensive lookup tables, making AF-MCL more suitable for onboard execution on resource-constrained micro-UAVs.

\subsection{Kidnapping Recovery} \label{subsec:kidnap}
A kidnapping scenario tests how quickly each filter recovers from a sudden, unmodeled displacement. A simulated office environment with two rooms and walls of varying widths is shown in Fig.~\ref{fig:simulation_comparison_kidnapping}. The UAV starts in one room; after 2~s, it is instantaneously "kidnapped" to a different location while the filter's belief still reflects the original position. The kidnapping lasts from 2~s to 12~s, once it ends, each algorithm must re-localize. Recovery time is defined as the time needed to re-converge to a low-error estimate after the kidnapping begins.

As shown in Fig.~\ref{fig:simulation_comparison_kidnapping}a, AF-MCL recovers in 2.9~s, compared to 16.9~s for MCL (nearly a 6x improvement). Fig.~\ref{fig:simulation_comparison_kidnapping}b shows the corresponding error over time: the mean estimation error during the trajectory is 0.191~m for AF-MCL versus 0.701~m for MCL. This gap is attributable to AF-MCL's augmentation strategy (injecting random particles when a divergence is detected), which lets it recover from sudden relocalization far faster than standard MCL.

Figs.~\ref{fig:simulation_comparison_kidnapping}c and~\ref{fig:simulation_comparison_kidnapping}d show the internal signals that drive this augmentation. During the recovery times, $p_{\mathrm{random}}$ rises above 0, meaning the filter starts injecting random particles into the belief instead of relying only on resampled ones. This is triggered by the drop in $\bar{\ell}_{\mathrm{fast}}$ below $\bar{\ell}_{\mathrm{slow}}$ over the same times: a sudden fall in short-term average weight relative to the long-term average signals that the current particle set no longer explains the incoming measurements, i.e., the filter has lost track of the true state. Together, these two signals confirm that the augmentation strategy activates exactly when needed and is the mechanism responsible for AF-MCL's fast recovery.
\begin{figure}[t]
  \centering
  \begin{subfigure}[b]{0.48\linewidth}
    \centering
    \includegraphics[width=\linewidth]{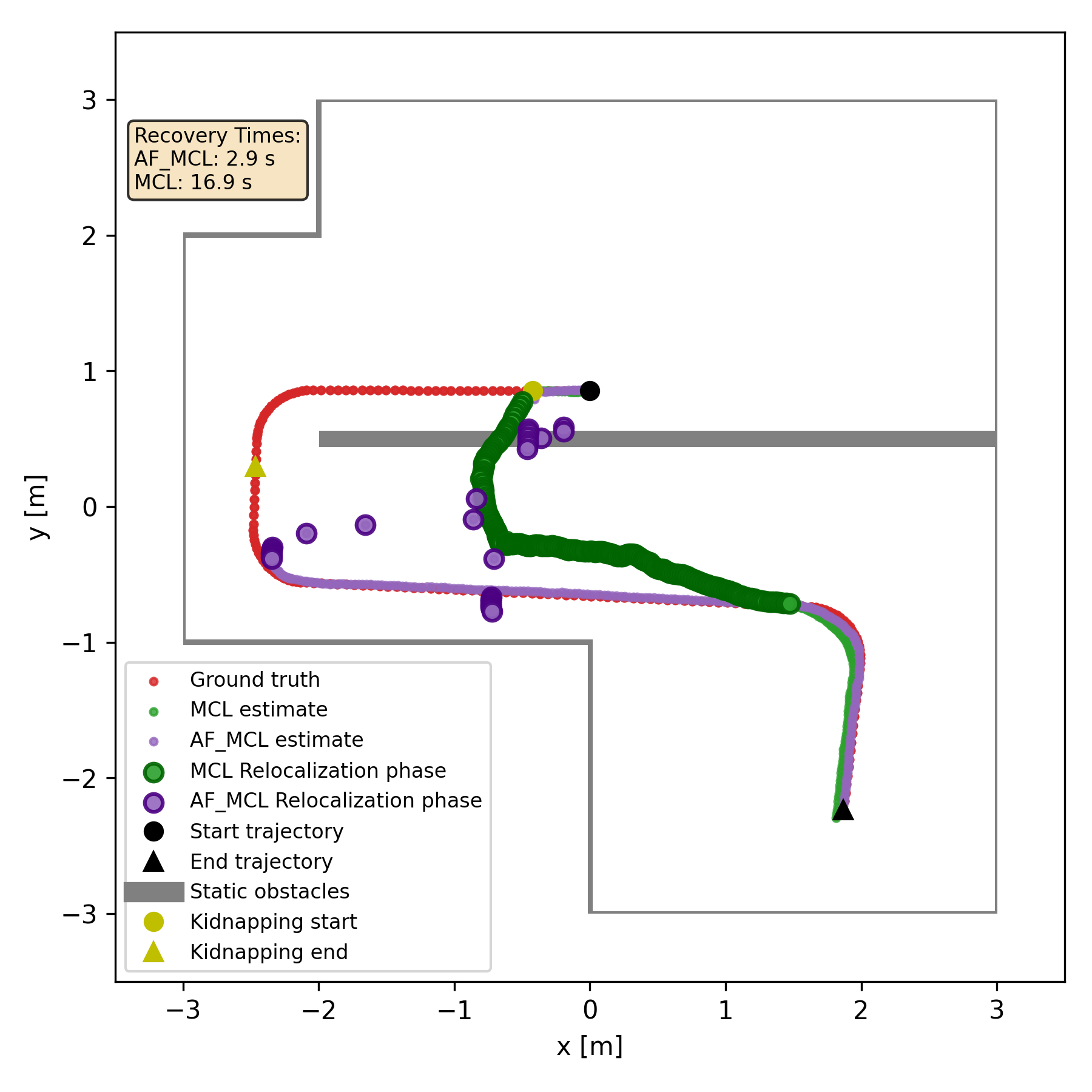}
    \caption{}
    \label{fig:kidnapping_traj}
  \end{subfigure}\hfill
  \begin{subfigure}[b]{0.48\linewidth}
    \centering
    \begin{subfigure}{\linewidth}
      \centering
      \includegraphics[width=\linewidth]{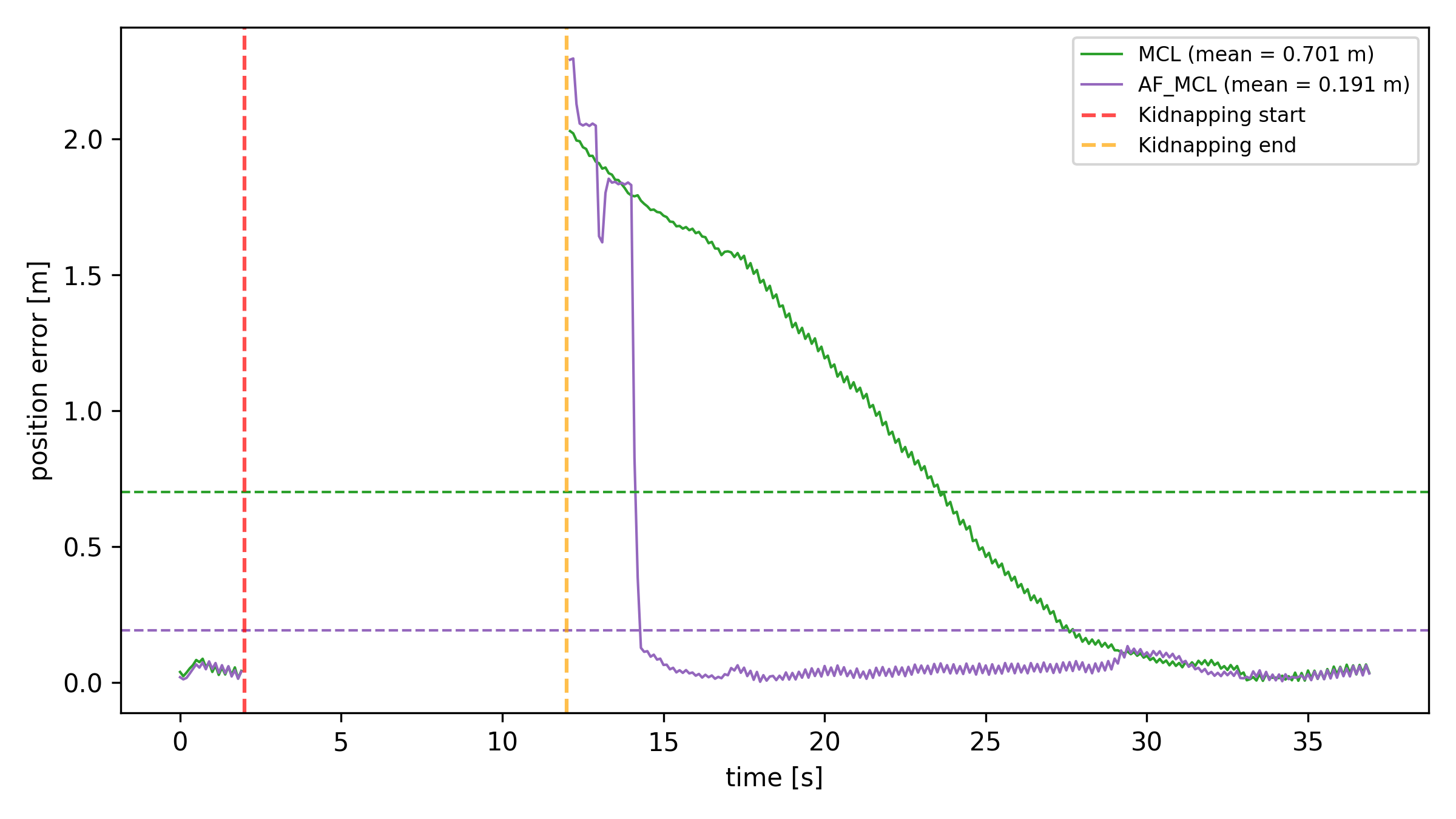}
      \caption{}
      \label{fig:kidnapping_err}
    \end{subfigure}\\[1ex]
    \begin{subfigure}{0.48\linewidth}
      \centering
      \includegraphics[width=\linewidth]{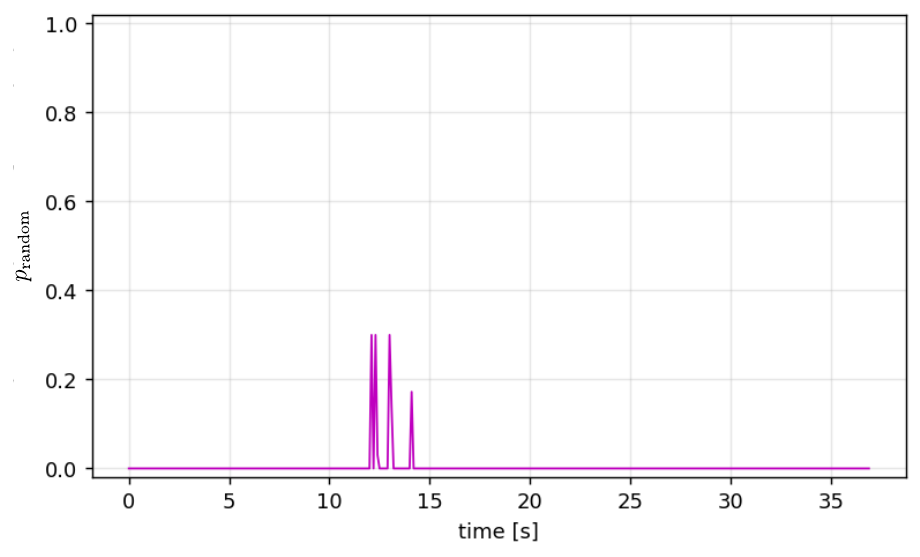}
      \caption{}
      \label{fig:kidnapping_prandom}
    \end{subfigure}\hfill
    \begin{subfigure}{0.48\linewidth}
      \centering
      \includegraphics[width=\linewidth]{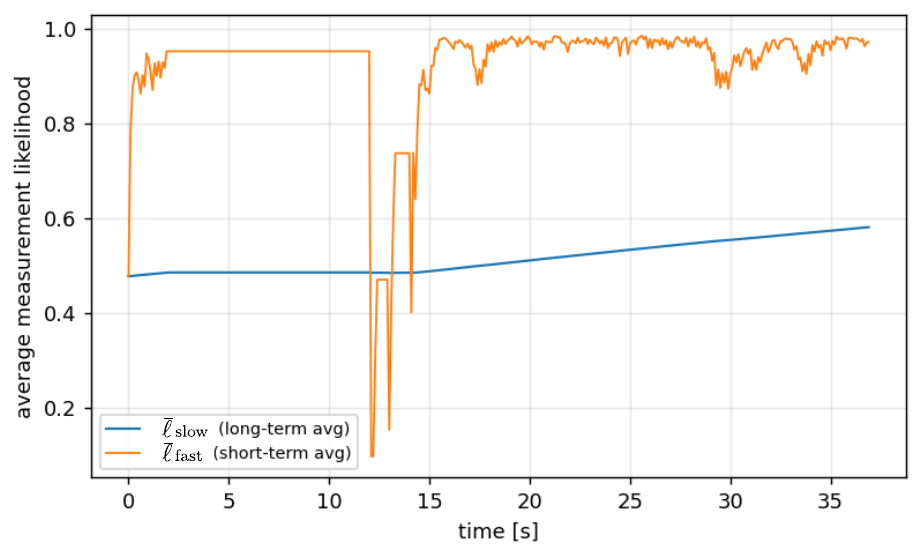}
      \caption{}
      \label{fig:kidnapping_weights}
    \end{subfigure}
  \end{subfigure}
  \caption{Kidnapping scenario: (a) Trajectories and kidnapping window. (b) Position error over time. (c) $p_{\mathrm{random}}$ particle injection. (d) $\bar{\ell}_{\mathrm{slow}}$ and $\bar{\ell}_{\mathrm{fast}}$ triggering recovery.}
  \label{fig:simulation_comparison_kidnapping}
\end{figure}
\subsection{Runtime vs. Number of Particles}
\label{sec:runtime_scaling}
To examine the complexity analysis of Proposition~\ref{prop:complexity}, we instrumented the AF-MCL update by stage and measured runtime, position error, and $\sigma$ for $N \in \{25, 50, 100, 200, 400\}$ along an S-shaped trajectory in a $6\,\mathrm{m} \times 6\,\mathrm{m}$ environment with two dynamic obstacles (Table~\ref{tab:runtime_particle}).

\begin{table}[t]
    \centering
    \caption{Runtime and position error vs. $N$. Mean stage times in ms: Cls = beam classification (incl. ray casting), Prp = propagation, Wgt = weighting, Rsm = resampling, Tot = total update time (mean~$\pm$~std, ms); Err = position error (m); $\sigma$ = online uncertainty (m).}
    \label{tab:runtime_particle}
    \small
    \begin{tabularx}{\linewidth}{
       >{\hsize=0.05\hsize}X
      >{\hsize=0.1\hsize}X
      >{\hsize=0.1\hsize}X
      >{\hsize=0.1\hsize}X
      >{\hsize=0.1\hsize}X
      >{\hsize=0.2\hsize}X
      >{\hsize=0.2\hsize}X
      }
        \toprule
        $N$ & Cls & Prp & Wgt & Rsm & Tot & Err\\
        \midrule
        25  & 1.79 & 0.08 & 0.78 & 0.18 & 2.95~$\pm$~0.92 & 0.049~$\pm$~0.036 \\
        50  & 2.04 & 0.09 & 0.92 & 0.29 & 3.48~$\pm$~1.10 & 0.050~$\pm$~0.045 \\
        100 & 1.95 & 0.09 & 0.90 & 0.46 & 3.53~$\pm$~0.81 & 0.048~$\pm$~0.023 \\
        200 & 2.09 & 0.11 & 1.07 & 0.83 & 4.25~$\pm$~1.16 & 0.046~$\pm$~0.028 \\
        400 & 2.02 & 0.13 & 1.08 & 1.51 & 4.88~$\pm$~0.76 & 0.047~$\pm$~0.025 \\
        \bottomrule
    \end{tabularx}
\end{table}

The measurements are consistent with Proposition~\ref{prop:complexity}. A $16\times$ increase in $N$ raises the total update time by only about $65\%$. The classification time, of which ray casting accounts for about $97\%$, stays within 1.8--2.1\,ms for all $N$, as expected from its $N$-independent cost $\mathcal{O}(MC_{\mathrm{ray}})$. The particle-dependent stages grow with $N$: resampling increases about $8\times$, weighting about $1.4\times$, and propagation remains negligible. Classification is the largest single cost for every $N$, but its share of the total falls from about $61\%$ at $N=25$ to $41\%$ at $N=400$, where resampling (31\%) becomes comparable. Hence, dynamic-beam handling does not add per-particle ray casting, in contrast to a particle-wise scheme with cost $\mathcal{O}(NMC_{\mathrm{ray}})$.

Position error shows no meaningful dependence on $N$ in this range, since the differences are well within one standard deviation, so $N=25$--$100$ already gives sufficient accuracy at lower cost in this setup.

\subsection{Validation of the Geometric Analysis} \label{sec:observability_validation} 
To illustrate Propositions~\ref{prop:observability} and~\ref{prop:error_bound}, we evaluate $\sigma_{\min}(H_{Q_t})$ along an S-shaped trajectory in the $6\,\mathrm{m}\times6\,\mathrm{m}$ arena. Four dynamic obstacles are placed so that beam rejection periodically leaves only the front and back sensors. Their range gradients are anti-parallel, and therefore Proposition~\ref{prop:observability} predicts $\operatorname{rank}H_{Q_t}=1$ and $\sigma_{\min}(H_{Q_t})=0$ whenever only this pair is retained. Thus, filtering corrupted measurements can simultaneously reduce the geometric information available for localization. 

Fig.~\ref{fig:sigma_min_vs_error} shows $\sigma_{\min}(H_{Q_t})$ and the position error over time. The predicted degeneracy occurs around $t=3.8$--$3.9$~s, $7.6$--$8.5$~s, and $18.2$--$18.5$~s, where $\sigma_{\min}$ reaches zero. The largest error increase, reaching approximately $0.19$~m, coincides with the $7.6$--$8.5$~s interval, consistent with the weaker geometric constraints identified by Proposition~\ref{prop:error_bound}. However, not every loss of instantaneous rank produces a large error, since AF-MCL also carries information from the prior belief and motion model. The experiment therefore illustrates the conditioning of the retained measurements rather than a direct relation between $\sigma_{\min}(H_{Q_t})$ and the AF-MCL estimation error.

\begin{figure}[t]
    \centering
    \includegraphics[width=0.6\linewidth]{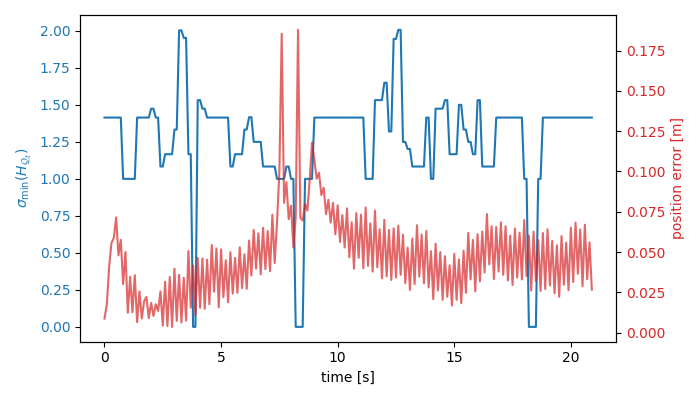}
    \caption{$\sigma_{\min}(H_{Q_t})$ (blue) and position error (red) along the S-shaped trajectory with dynamic obstacles. Drops in $\sigma_{\min}$ from beam rejection coincide with elevated localization error.}
    \label{fig:sigma_min_vs_error}
\end{figure}

\subsection{Real-World Experimental Results}
\label{subsec:Real_World_Experimental_Results}
The proposed method was validated using a Crazyflie~2.1 equipped with a Flow Deck v2 and a Multi-ranger Deck, which together provide four-directional range measurements (Fig.~\ref{fig:crazyflie_radio}a). A Crazyradio PA (Fig.~\ref{fig:crazyflie_radio}b) handled communication with the host PC, and eight Qualisys cameras supplied ground-truth position (Fig.~\ref{fig:crazyflie_radio}c). The host PC runs Ubuntu~22.04.5 LTS on an Intel Core i7 CPU with 16~GB RAM. Velocity commands are transmitted every 100~ms; the official Crazyflie firmware\footnote{\url{https://github.com/bitcraze/crazyflie-firmware}} converts these commands into motor outputs via PID control on the onboard STM32F405 microcontroller. The proposed localization method runs externally on the host PC; onboard execution on the Crazyflie was not evaluated. Prior to the real-world trials, the setup was first validated in Gazebo Garden using CrazySim~\cite{LlanesICRA2024}.

\begin{figure}[t]
  \centering
  \begin{subfigure}{0.24\linewidth}
    \centering
    \includegraphics[width=\linewidth, height=\linewidth]{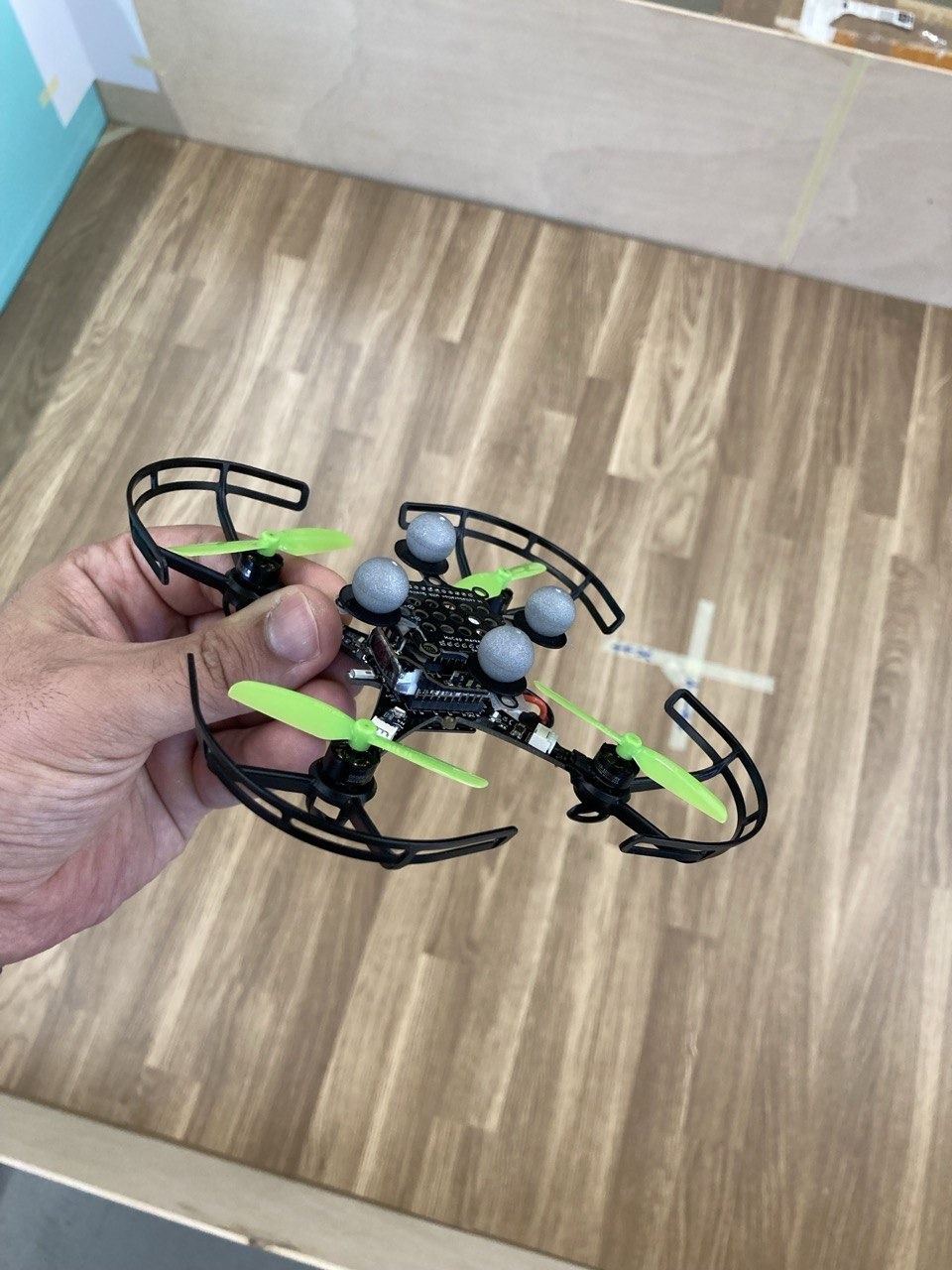}
    \caption{}
  \end{subfigure}
  \begin{subfigure}{0.24\linewidth}
    \centering
    \includegraphics[width=\linewidth, height=\linewidth]{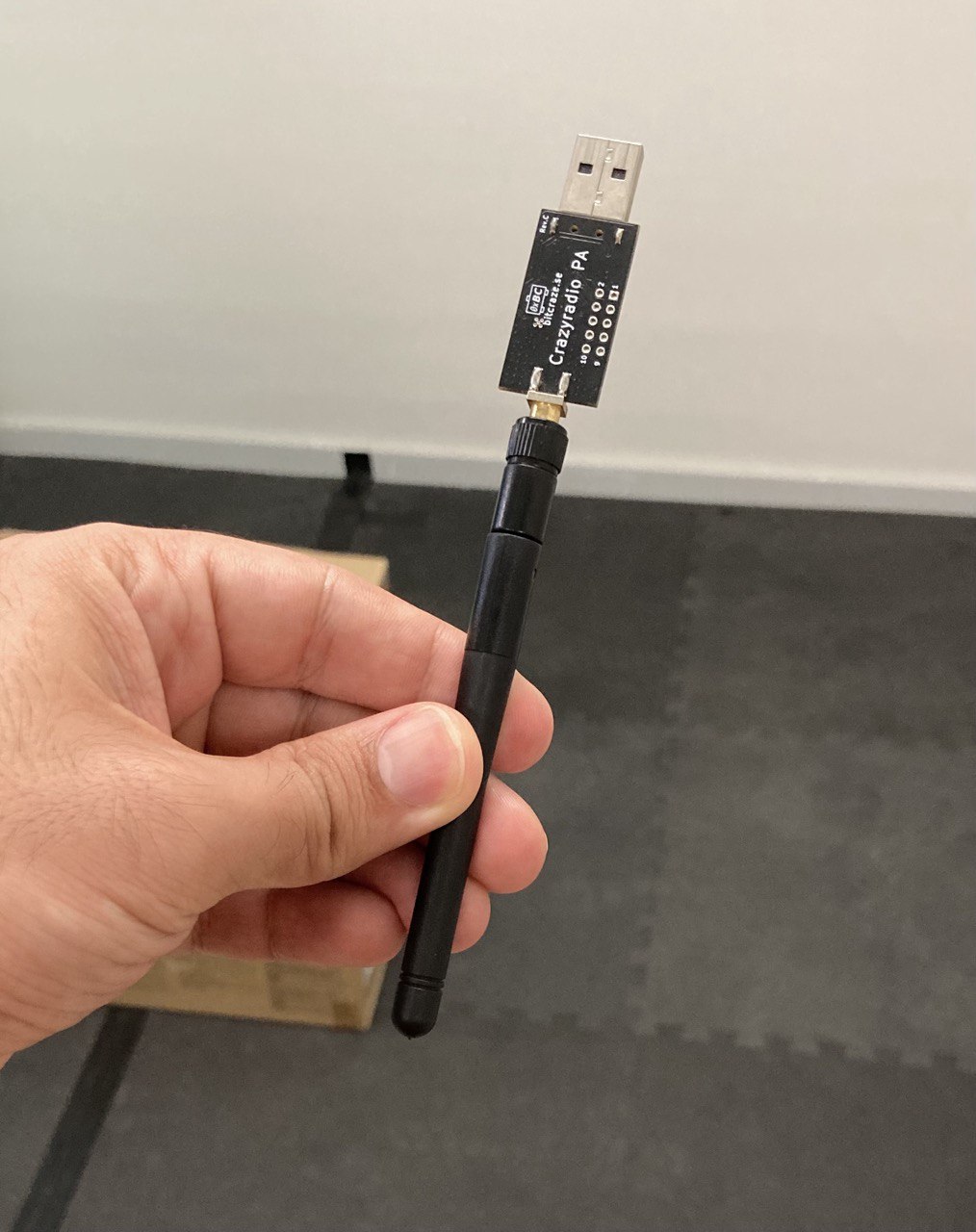}
    \caption{}
  \end{subfigure}
  \begin{subfigure}{0.48\linewidth}
    \centering
    \includegraphics[width=\linewidth, height=0.5\linewidth]{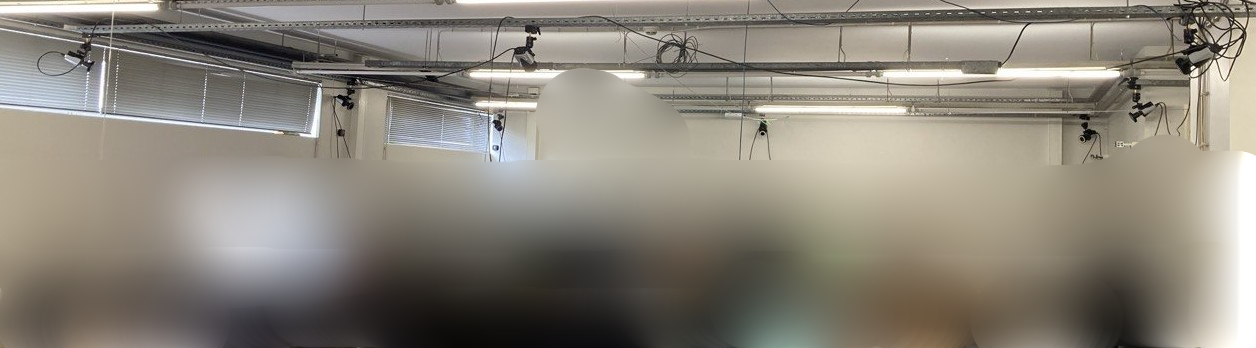}
    \caption{}
  \end{subfigure}
  \caption{(a) Crazyflie~2.1 with Flow Deck v2 and Multi-ranger Deck. (b) Crazyradio PA used for host-PC communication. (c) Eight Qualisys cameras supplied ground-truth position.}
  \label{fig:crazyflie_radio}
\end{figure}

Six scenarios were evaluated in a $1.63\times2.10~\mathrm{m}^2$ environment: S-shaped and square trajectories, each flown with no obstacles, with static obstacles only, and with combined static and dynamic obstacles. The four obstacle-containing configurations are shown in Fig.~\ref{fig:env_testing}. Table~\ref{tab:amcl_results} reports the mean position error and mean runtime for all six scenarios.

\begin{figure*}[t]
  \centering
  \begin{subfigure}{0.24\linewidth}
    \centering
    \includegraphics[width=\linewidth, height=0.5\linewidth]{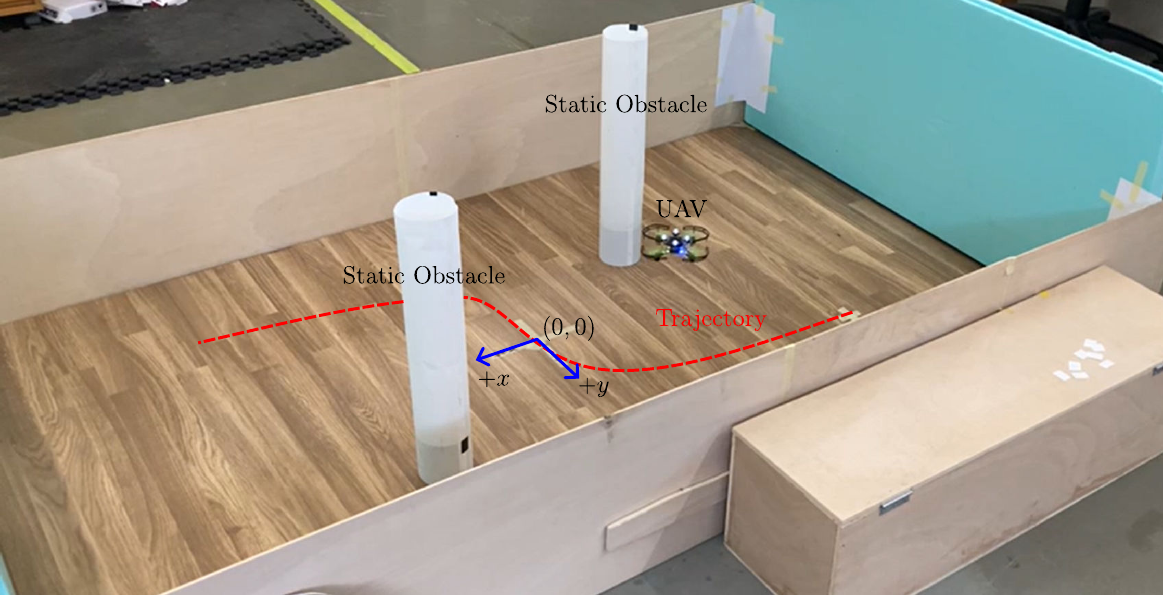}
    \caption{}
  \end{subfigure}
  \begin{subfigure}{0.24\linewidth}
    \centering
    \includegraphics[width=\linewidth, height=0.5\linewidth]{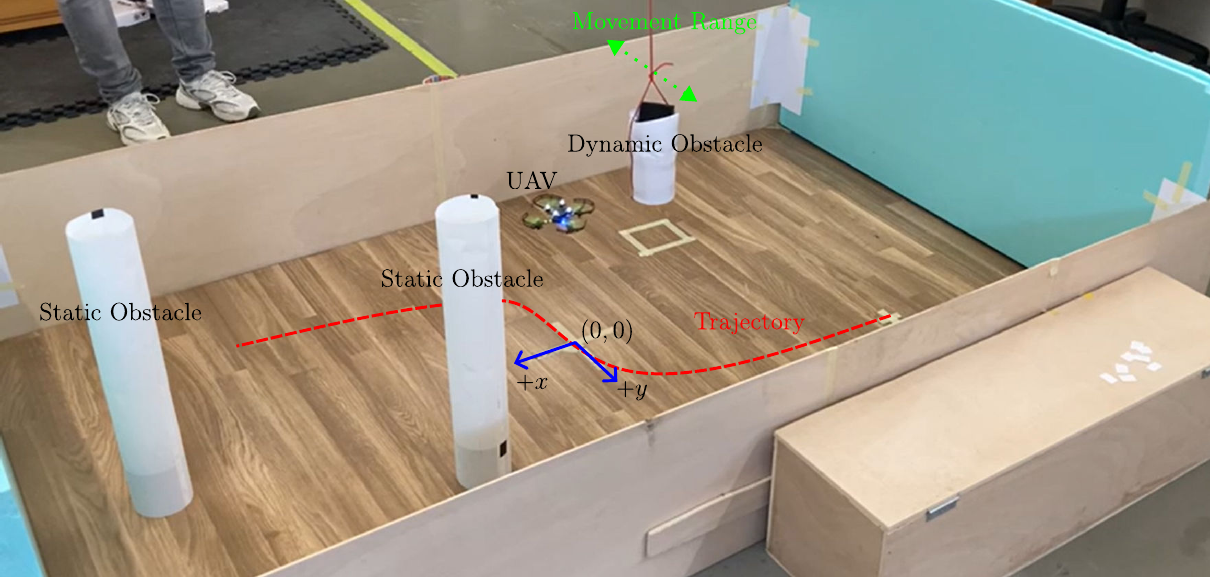}
    \caption{}
  \end{subfigure}
  \begin{subfigure}{0.24\linewidth}
    \centering
    \includegraphics[width=\linewidth, height=0.5\linewidth]{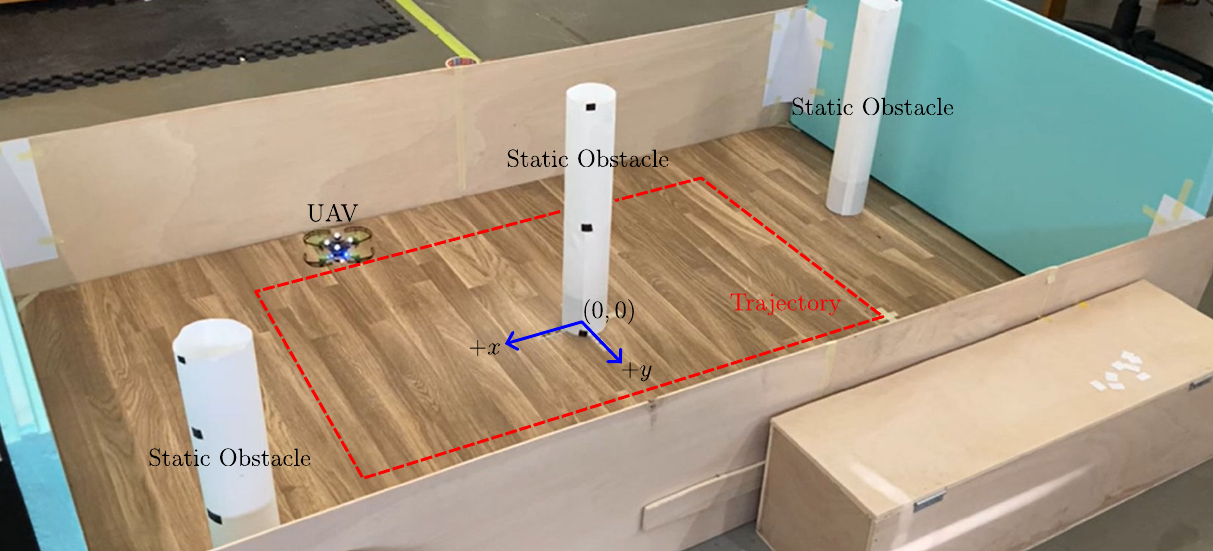}
    \caption{}
  \end{subfigure}
  \begin{subfigure}{0.24\linewidth}
    \centering
    \includegraphics[width=\linewidth, height=0.5\linewidth]{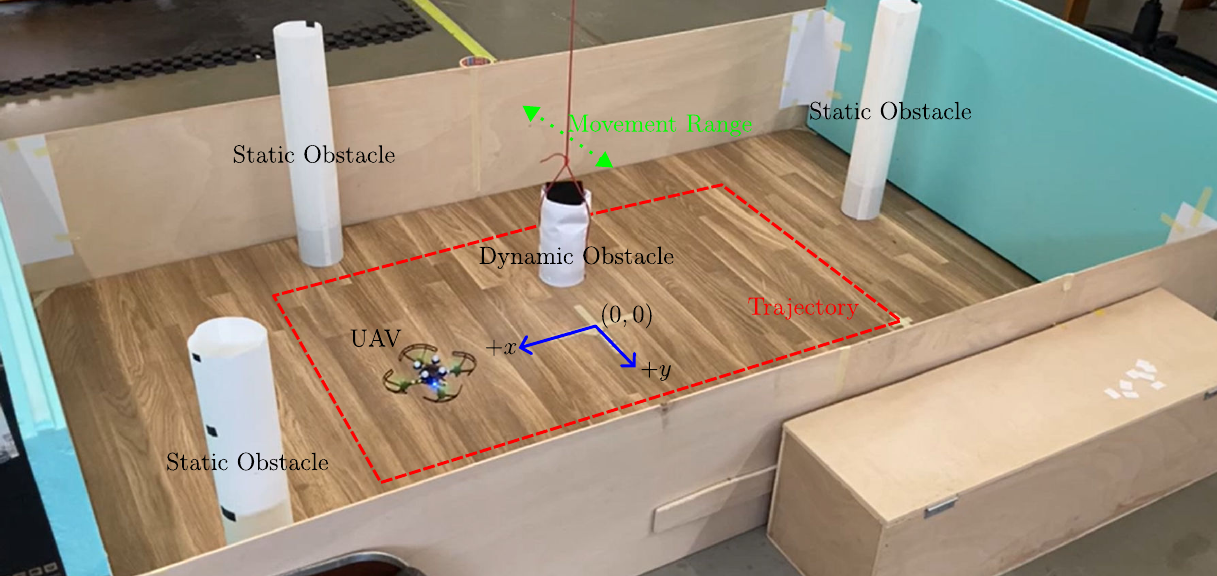}
    \caption{}
  \end{subfigure}
  \caption{Real-world test environments: (a) S-shape trajectory with static obstacles, (b) S-shape trajectory with static and dynamic obstacles, (c) square trajectory with static obstacles, and (d) square trajectory with static and dynamic obstacles.}
  \label{fig:env_testing}
\end{figure*}

\begin{table}[t]
\centering
\caption{AF-MCL localization performance }
\label{tab:amcl_results}
\begin{tabularx}{\linewidth}{
   >{\hsize=0.15\hsize}X
  >{\hsize=0.45\hsize}X
  >{\hsize=0.35\hsize}X
  >{\hsize=0.3\hsize}X
  >{\hsize=0.3\hsize}X
  }
\toprule
\textbf{Traj.} & \textbf{Scenario} & \textbf{Mean Pos.} & \textbf{Mean} & \textbf{Mean} $\sigma$ \\
 &  & \textbf{Error} [m] & \textbf{Runtime}[ms] & [m] \\
\midrule
\multirow{4}{*}{S-shape}
 & No Obstacle       & 0.067~$\pm$~0.012 & 1.43~$\pm$~0.24 & 0.058 \\
 & Static Obstacles  & 0.070~$\pm$~0.009 & 1.21~$\pm$~0.02 & 0.061 \\
 & Static\&Dynamic Obstacles & 0.064~$\pm$~0.004 & 1.21~$\pm$~0.01 & 0.068 \\
\midrule
\multirow{4}{*}{Square}
 & No Obstacle       & 0.068~$\pm$~0.009 & 1.26~$\pm$~0.03 & 0.050 \\
 & Static Obstacles  & 0.073~$\pm$~0.006 & 1.21~$\pm$~0.06 & 0.051 \\
 & Static\&Dynamic Obstacles & 0.077~$\pm$~0.004 & 1.13~$\pm$~0.02 & 0.053 \\
\bottomrule
\end{tabularx}
\end{table}


Table~\ref{tab:amcl_results} shows that AF-MCL keeps the mean position error within a narrow range across all six scenarios, so its accuracy is largely insensitive to trajectory shape and obstacle configuration. For the S-shape trajectory the error is essentially flat across obstacle configurations, and for the square trajectory it grows slightly as obstacles are added, with a total increase under 1~cm. The online uncertainty $\sigma$ has the same magnitude as the error, but it is lower than the error in five of the six scenarios, most clearly for the square trajectory. This is expected, since $\sigma$ reflects only the spread of the particle set and does not include the map and ground-truth errors discussed below. The mean runtime stays well within the 100~ms command interval on the host computer, supporting real-time operation and motivating future onboard implementation. The runtime was measured on the host PC, so it does not demonstrate execution on the Crazyflie flight controller.


Figs.~\ref{fig:real_plots}a--d compare the AF-MCL estimate with ground truth for the four obstacle-containing scenarios (S-shape and square, with static obstacles and with static and dynamic obstacles, respectively). In all cases the estimated path follows the ground truth without visible divergence or drift, even near obstacles. Figs.~\ref{fig:real_plots}e--h show that the position error stays low and stable over time rather than growing, so AF-MCL does not accumulate drift. The square trajectory with dynamic obstacles (h) shows the highest error, consistent with Table~\ref{tab:amcl_results}.

The reported errors are also affected by two sources of measurement uncertainty outside the localization algorithm, each with an estimated engineering bound of approximately $\pm2$~cm (not a measured RMS error or standard deviation, and not combined here). First, the occupancy grid map was built manually from measurements of the environment size and obstacle positions, which introduces map errors. The range sensors are also sensitive to surface color, and darker walls or obstacles produce noisier readings. Second, the Qualisys ground truth can degrade when the UAV is hidden behind obstacles. Because the surrounding walls are tall, fewer cameras see the markers at these positions, which reduces tracking accuracy. These errors contribute directly to the reported position error, so Table~\ref{tab:amcl_results} should be read as an accuracy estimate within these bounds.
\begin{figure*}
  \centering
  \begin{subfigure}{0.24\linewidth}
    \centering
    \includegraphics[width=\linewidth]{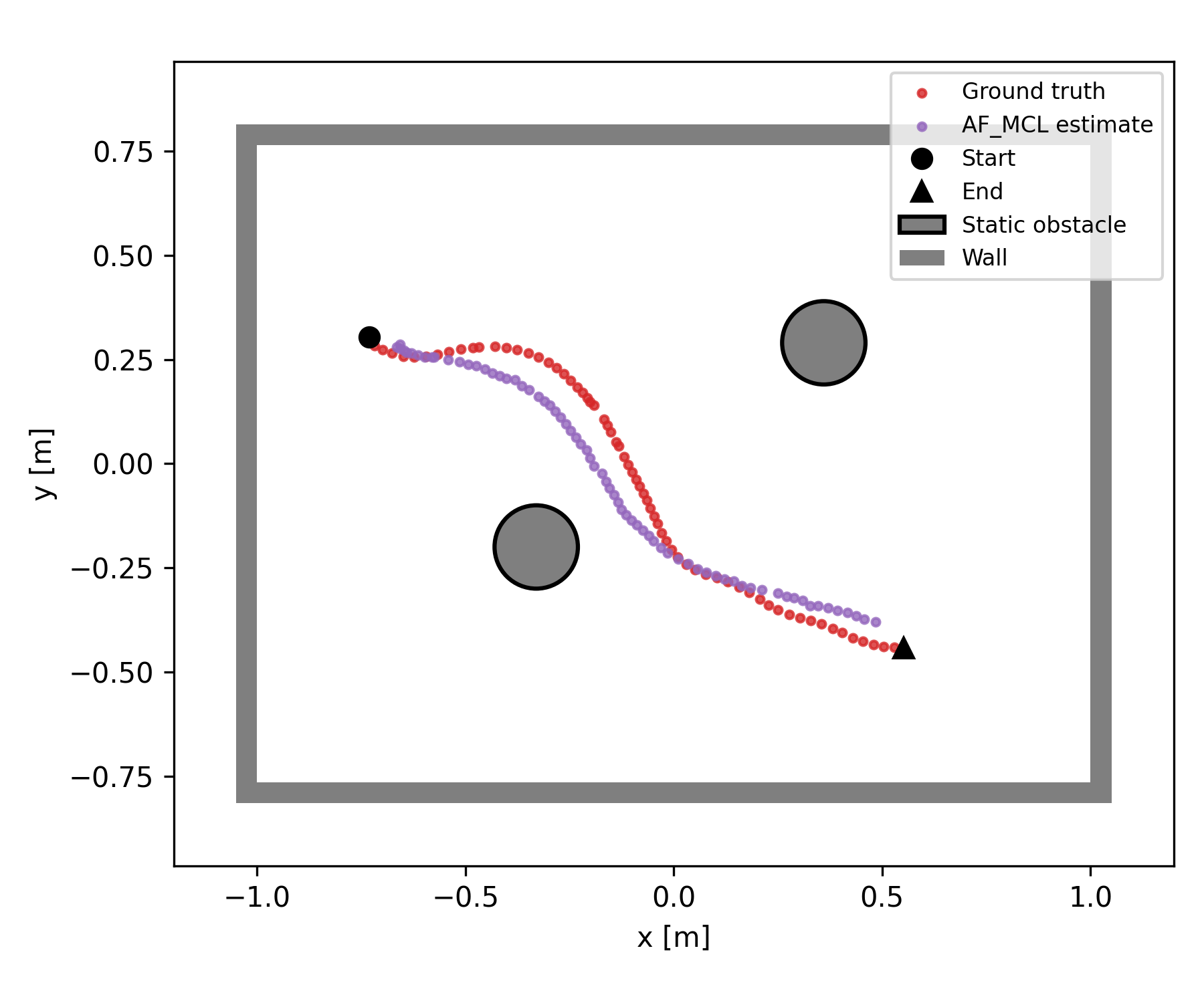}
    \caption{}
  \end{subfigure}
  \begin{subfigure}{0.24\linewidth}
    \centering
    \includegraphics[width=\linewidth]{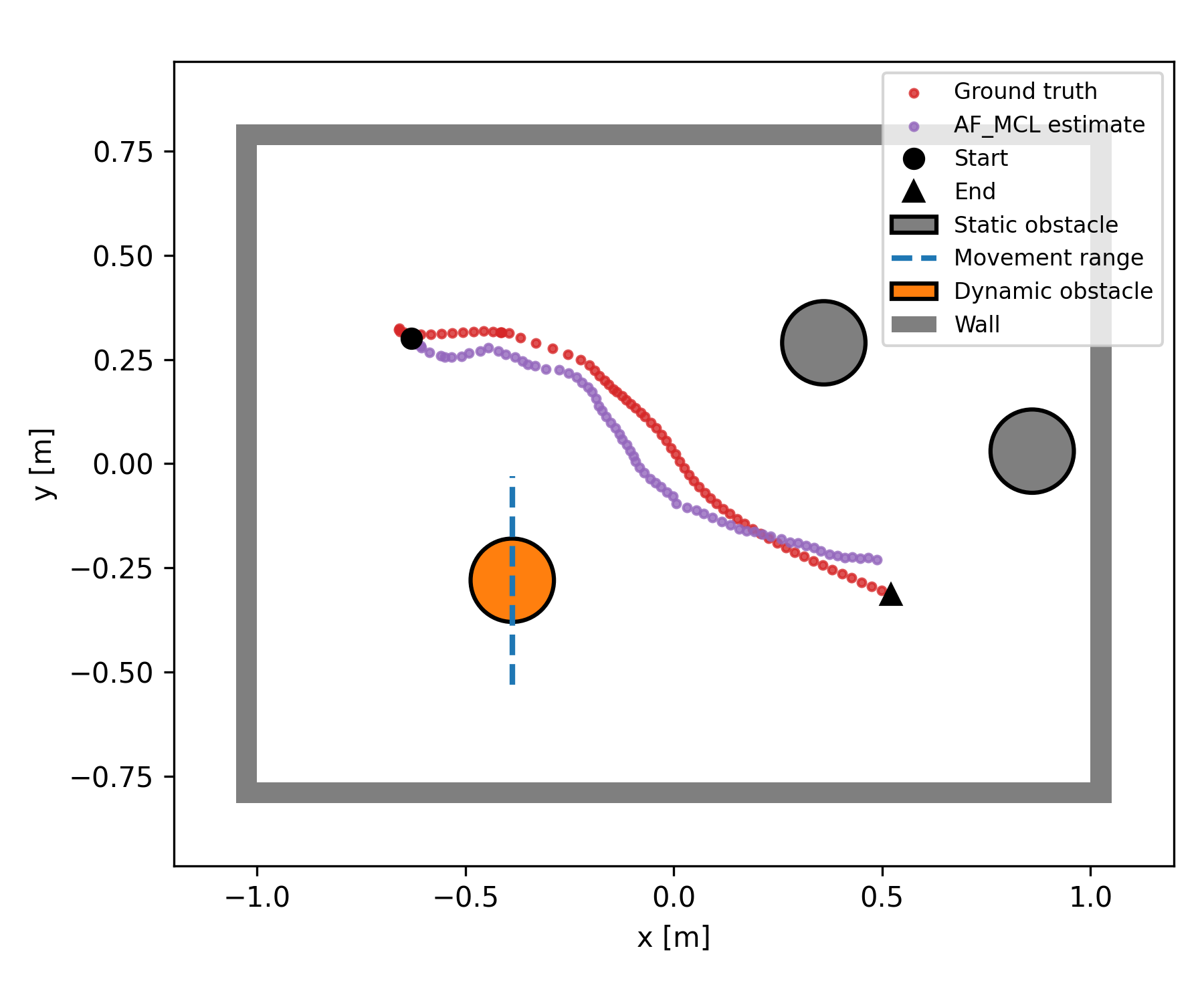}
    \caption{}
  \end{subfigure}
  \begin{subfigure}{0.24\linewidth}
    \centering
    \includegraphics[width=\linewidth]{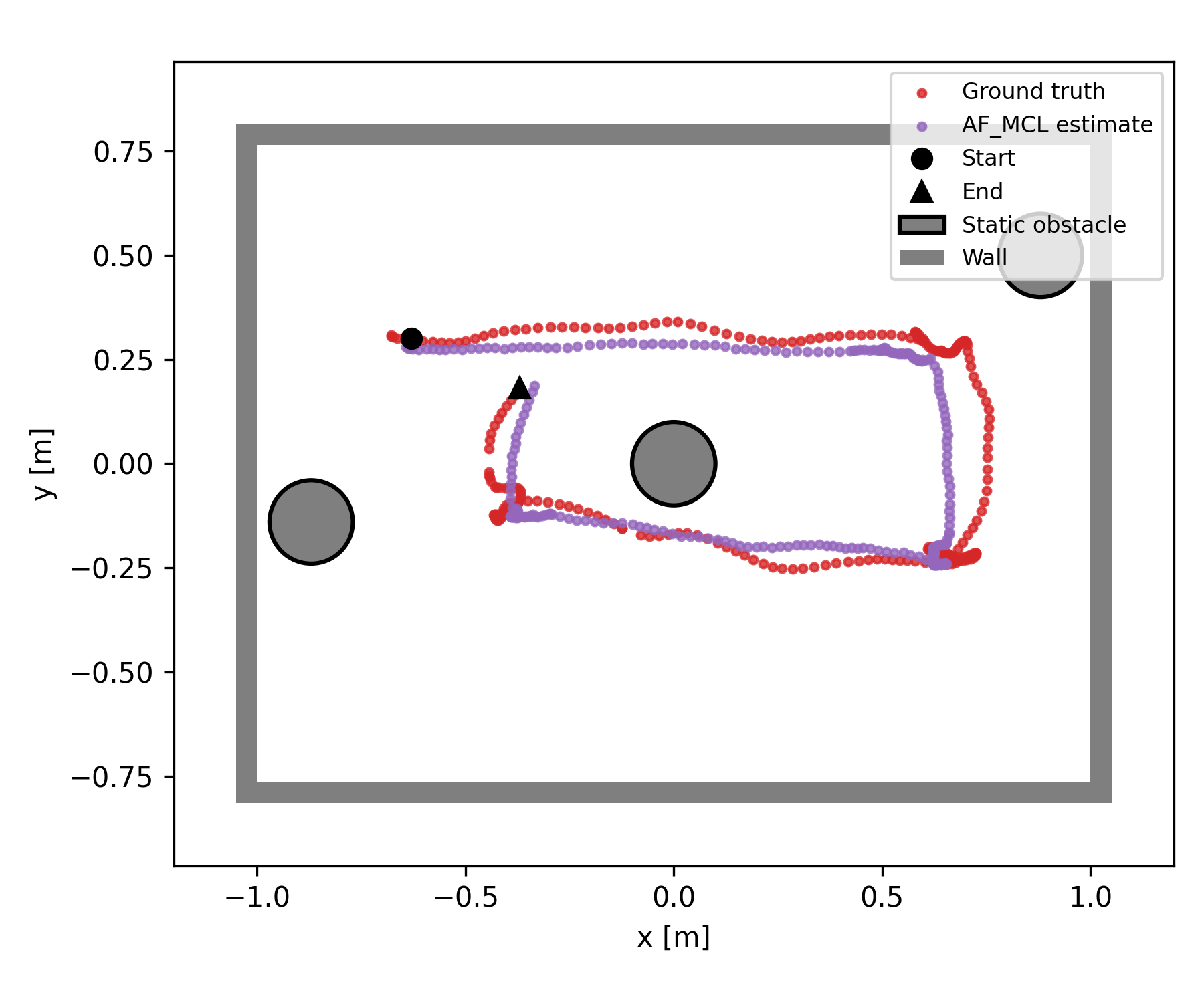}
    \caption{}
  \end{subfigure}
  \begin{subfigure}{0.24\linewidth}
    \centering
    \includegraphics[width=\linewidth]{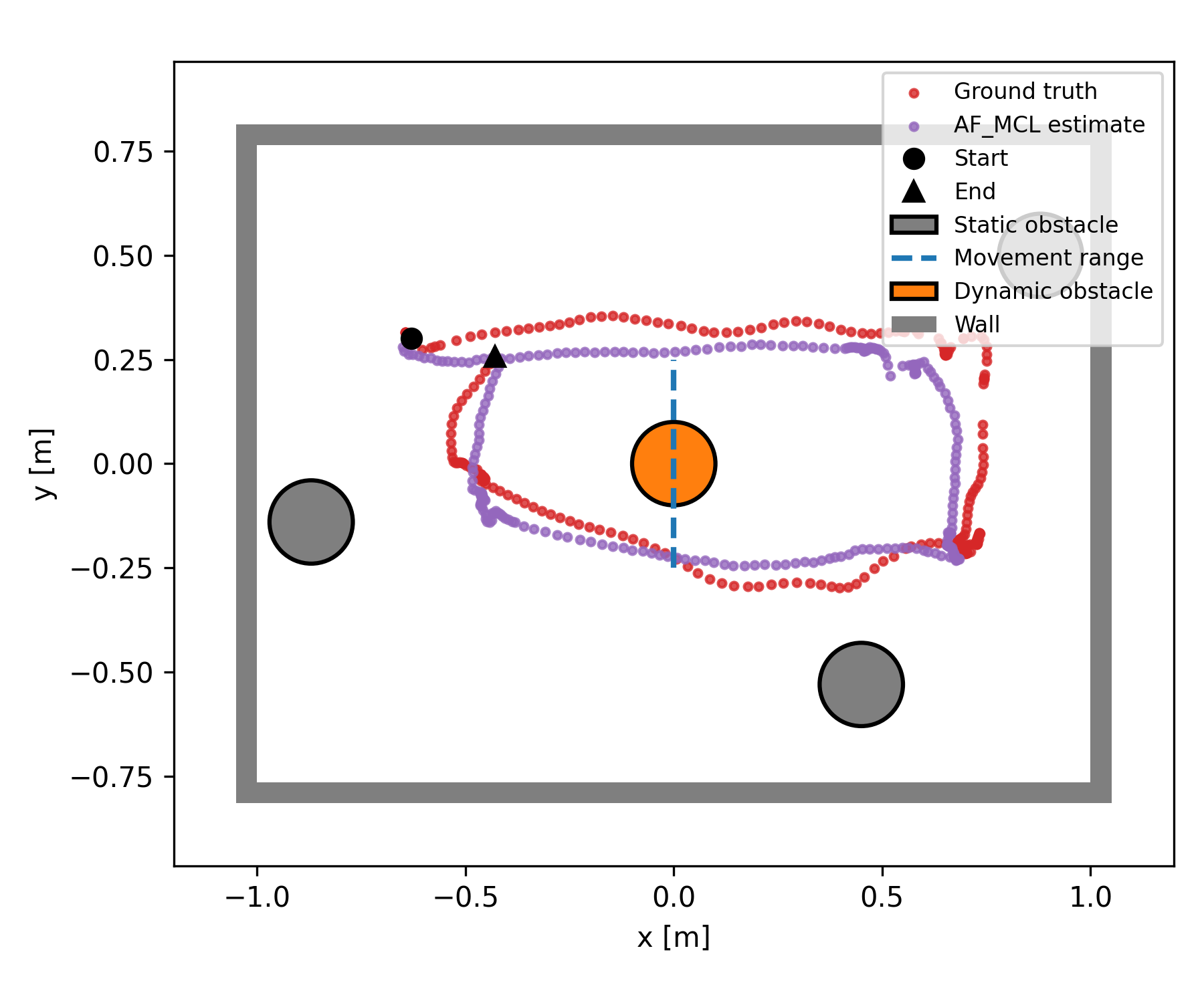}
    \caption{}
  \end{subfigure}
  \begin{subfigure}{0.24\linewidth}
    \centering
    \includegraphics[width=\linewidth]{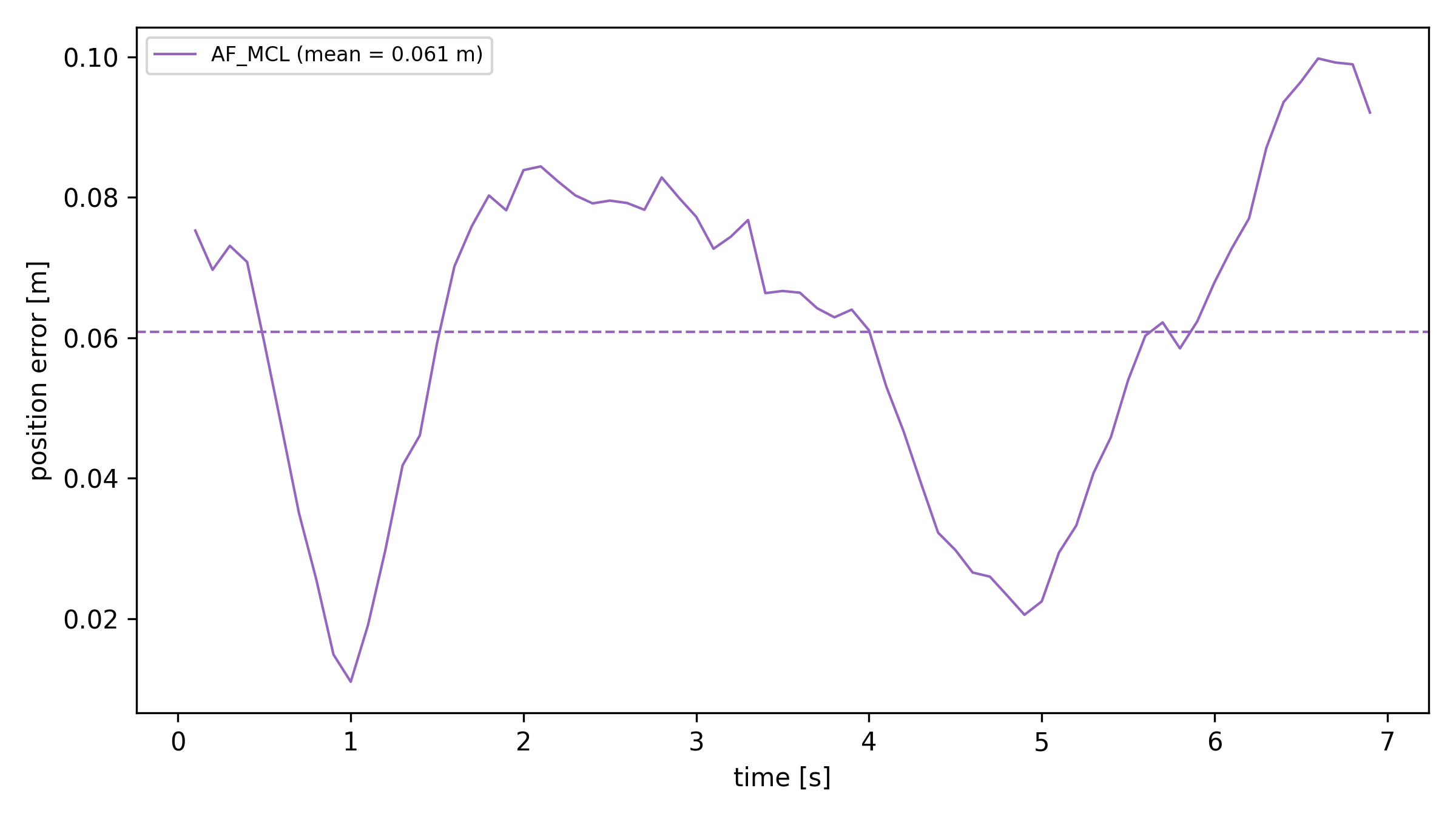}
    \caption{}
  \end{subfigure}
  \begin{subfigure}{0.24\linewidth}
    \centering
    \includegraphics[width=\linewidth]{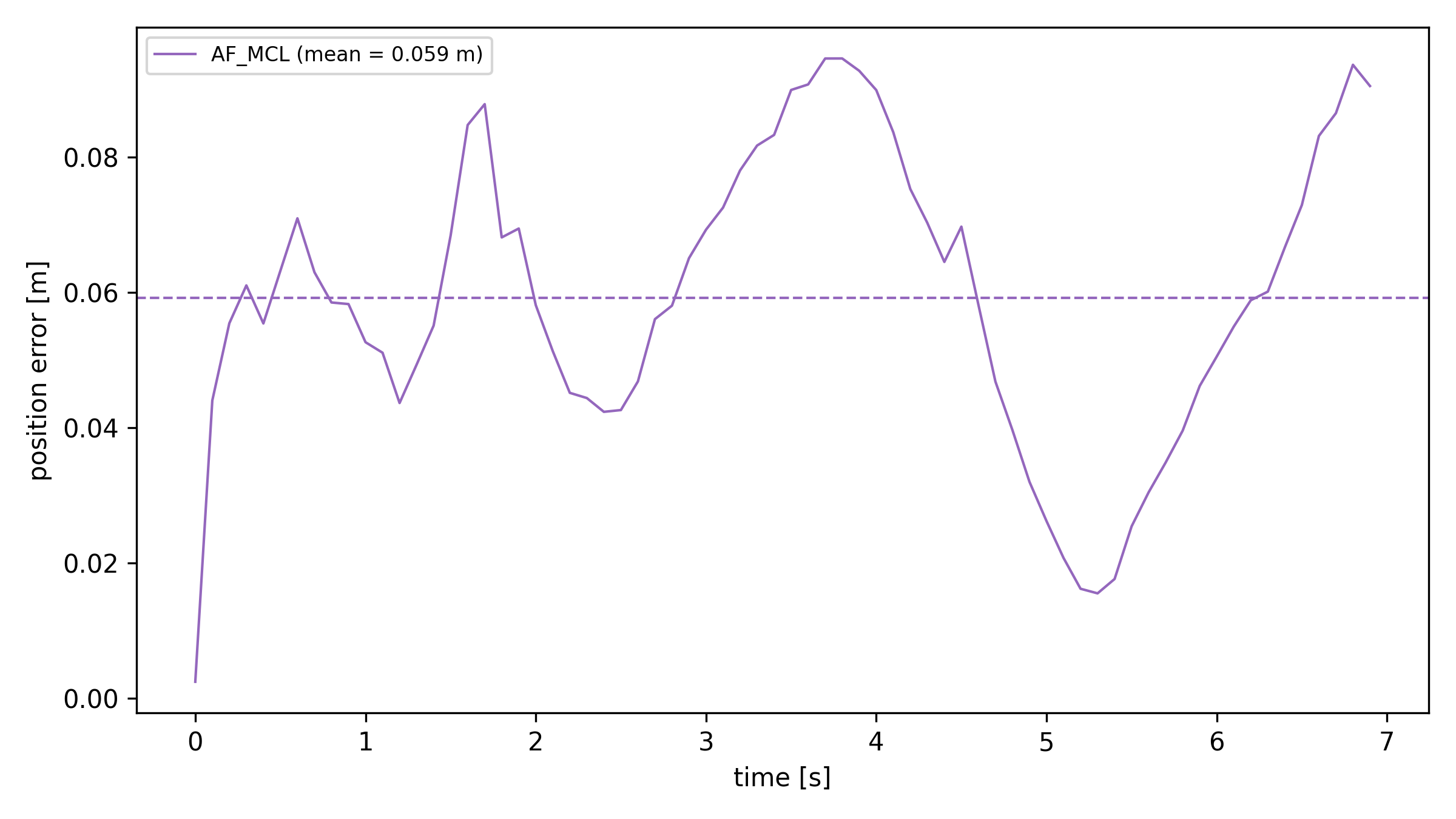}
    \caption{}
  \end{subfigure}
  \begin{subfigure}{0.24\linewidth}
    \centering
    \includegraphics[width=\linewidth]{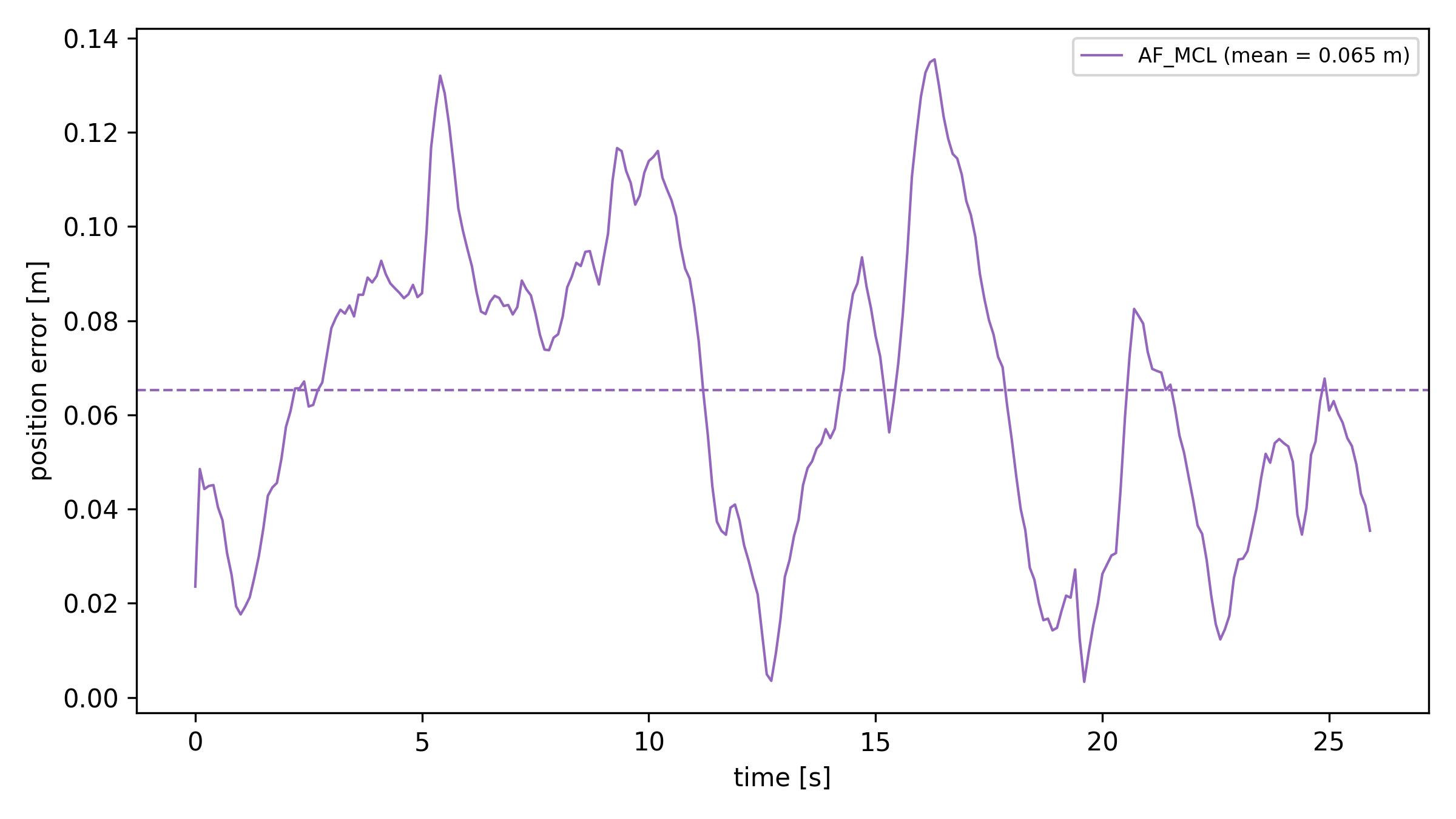}
    \caption{}
  \end{subfigure}
  \begin{subfigure}{0.24\linewidth}
    \centering
    \includegraphics[width=\linewidth]{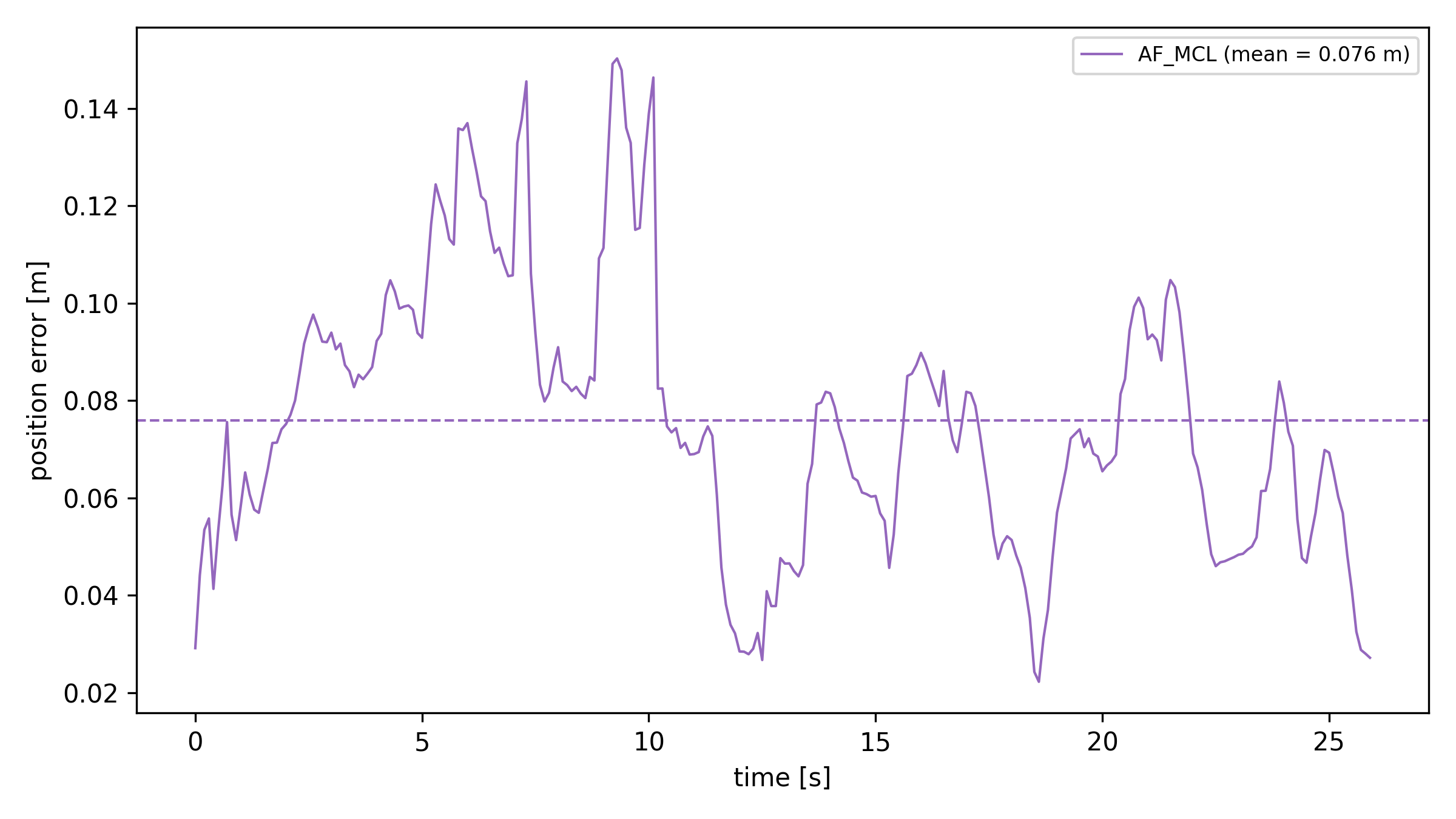}
    \caption{}
  \end{subfigure}
  \caption{Real-world scenario results. (a)--(d) AF-MCL position estimate versus ground truth for: (a) S-shape trajectory with static obstacles; (b) S-shape trajectory with static and dynamic obstacles; (c) square trajectory with static obstacles; (d) square trajectory with static and dynamic obstacles. (e)--(h) Corresponding position error over time for (a)--(d), with mean errors of 0.061~m, 0.059~m, 0.065~m, and 0.076~m, respectively.}
  \label{fig:real_plots}
\end{figure*}


To test the simulation findings on physical data, the logged real-world measurements (ranges, velocities, yaw, and ground truth) were replayed offline through MCL, A-MCL, F-MCL, and AF-MCL with identical parameters and map, using 5 random seeds per flight. The replay is open-loop, since the logged ranges come from flights run with AF-MCL. For the square trajectory with static and dynamic obstacles, standard MCL degrades to 0.096\,m mean error, while beam filtering reduces it by about 23\% (F-MCL and AF-MCL, 0.074\,m). A-MCL is the worst (0.186\,m), because corrupted likelihoods trigger false particle injection. Standard MCL's $\sigma$ stays at 0.047\,m despite its larger error, so it is overconfident, whereas A-MCL's $\sigma$ inflates (0.169\,m) with its error. For F-MCL and AF-MCL, $\sigma$ (0.048 and 0.050\,m) stays at the scale of the error. This agrees with the simulation ablation: filtering, not augmentation, is what protects accuracy under dynamic obstacles.
\section{Conclusion} \label{conclusion}
We presented AF-MCL for indoor UAV localization using only four single-point range sensors and measured heading. AF-MCL combines dynamic-beam filtering with adaptive particle injection for recovery from localization failures, while performing beam classification independently of the particle count. The particle distribution additionally provides an online measure of localization uncertainty. Real-world experiments achieved mean errors of 6.4--7.7~cm, while simulations in $6\times6$~m and $10\times10$~m environments achieved 4.5--5.2~cm, outperforming standard MCL under dynamic obstacles. The geometric analysis showed that beam rejection can leave the retained measurements poorly conditioned or rank deficient, increasing their sensitivity to measurement perturbations. Future work will exploit this information for geometry-aware filtering and uncertainty adaptation, and extend the approach to localization without a prior map.
\bibliographystyle{ieeetr}
\bibliography{ref}
\end{document}